\documentclass{article} 

\newif\ifarxiv
\arxivfalse
\arxivtrue
\ifdefined\forcearxiv
  \arxivtrue
\fi
\ifdefined\forcecolm
  \arxivfalse
\fi

\PassOptionsToPackage{dvipsnames}{xcolor}
\ifarxiv
  \usepackage[preprint]{colm2026_conference}
\else
  \usepackage[final]{colm2026_conference}
\fi

\usepackage{microtype}
\usepackage{hyperref}
\usepackage{url}
\usepackage{booktabs}
\usepackage{graphicx}
\usepackage{subcaption}
\usepackage{amsmath}
\usepackage{amssymb}
\usepackage{cleveref}
\usepackage{wrapfig}
\usepackage{amsthm}
\usepackage{enumitem}
\usepackage{multirow}

\usepackage{algorithmic}
\usepackage{algorithm}
\usepackage{xcolor}

\newtheorem{theorem}{Theorem}[section]

\newtheorem{lemma}[theorem]{Lemma}

\usepackage{lineno}

\newcommand{\arxiv}[1]{\ifarxiv #1\fi}
\newcommand{\submission}[1]{\ifarxiv\else #1\fi}

\newif\ifshowcomments
\showcommentsfalse 

\definecolor{mayablue}{rgb}{0.21,0.49,0.74}
\NewDocumentCommand{\hl}{ mO{} }{%
  \ifshowcomments
    \textcolor{mayablue}{\textsuperscript{\textit{Hanlin}}\textsf{\textbf{\small[#1]}}}%
  \fi
}

\NewDocumentCommand{\yc}{ mO{} }{%
  \ifshowcomments
    \textcolor{RedOrange}{\textsuperscript{\textit{Yiling}}\textsf{\textbf{\small[#1]}}}%
  \fi
}

\NewDocumentCommand{\shi}{ mO{} }{%
  \ifshowcomments
    \textcolor{Periwinkle}{\textsuperscript{\textit{Shi}}\textsf{\textbf{\small[#1]}}}%
  \fi
}

\definecolor{darkblue}{rgb}{0, 0, 0.5}
\hypersetup{colorlinks=true, citecolor=darkblue, linkcolor=darkblue, urlcolor=darkblue}

\title{Peer-Predictive Self-Training for Language Model Reasoning}

\author{Shi Feng\thanks{Equal contribution.} \\
Harvard University \\
\texttt{shifeng-thu@outlook.com}
\And
Hanlin Zhang\footnotemark[1] \\
Harvard University \\
\texttt{hanlinzhang@g.harvard.edu}
\And
Fan Nie \\
Stanford University \\
\texttt{niefan@stanford.edu}
\And
Sham Kakade \\
Harvard University \\
\texttt{sham@seas.harvard.edu}
\And
Yiling Chen \\
Harvard University \\
\texttt{yiling@seas.harvard.edu}
}

\begin{document}

\ifcolmsubmission
\linenumbers
\fi

\maketitle
\ifarxiv
\lhead{Preprint}
\fi

\newcommand{\method}{{PST}}

\begin{abstract}

Mechanisms for continued self-improvement of language models without external supervision remain an open challenge. 
We propose {\em Peer-Predictive Self-Training} (\method), a label-free fine-tuning framework in which multiple language models improve collaboratively by leveraging a cross-model aggregated response as an internal training signal. Given a prompt question, the models generate responses sequentially; the final aggregated answer--often more reliable than individual responses in practice--serves as an internal reference for learning. We measure how informative each intermediate response is about the aggregate using pointwise mutual information (PMI), and use this signal to scale self-training updates. Responses already aligned with the aggregate are updated less, while less informative or misaligned responses are updated more. On mathematical reasoning benchmarks (SimulEq, MATH-500-Numeric, and MultiArith),
\method\ improves exact-match accuracy by 2.2--4.3 percentage points across Gemma-2-2B, LLaMA-3.2-1B, and Qwen2.5-1.5B, and reduces the average
generator--verifier gap (GV-Gap) \citep{song2024mind} by 26--40\%, while requiring no external supervision or teacher--student hierarchy and relying solely on cross-model interactions. These results suggest that cross-model generations and peer-predictive feedback can serve as an effective approach for self-supervised training.
\end{abstract}

\section{Introduction}

Continued self-improvement of large language models (LLMs) without external supervision is a central open problem.
Many successful post-training pipelines rely on labeled data, explicit reward models, or carefully engineered feedback signals
\citep{qievolm,wang2025benefits,lu2025verification}.
While effective in controlled settings, these methods fundamentally limit scalability and adaptability, especially when
ground truth is expensive, incomplete, or unavailable, and when improvement must proceed continually without assuming
access to stronger external supervision \citep{saunders2022self, bowman2022measuring}.

A growing body of recent work therefore explores self-improvement driven by internally generated signals, including
self-training and multi-model interaction \citep{song2024mind,sun2025theoretical}.
These approaches have proven particularly effective for \emph{reasoning-oriented tasks}, where correct solutions often
require multi-step logical consistency rather than surface-level pattern matching \citep{wang2024alpine,wang2025benefits}.
In these settings, models may struggle to reliably generate correct answers, yet still retain the ability to recognize or
validate correct reasoning trajectories when presented with candidate solutions, reflecting a fundamental asymmetry
between generation and verification \citep{song2024mind,sun2025theoretical}.
This verification--generation asymmetry makes reasoning problems especially amenable to self-training based on internal
feedback signals.

Recent work further shows that homogeneous model populations can exhibit emergent coordination and collective behavior
through open-ended interaction, even in the absence of external labels \citep{jiang2025artificial}.
However, how to elicit reliable and stable supervision signals in a fully unsupervised manner remains unresolved,
particularly in the continual self-improvement regime where error accumulation and confirmation bias can severely limit
long-term gains \citep{song2024mind}.

\begin{wrapfigure}{r}{0.55\linewidth}
\centering
\includegraphics[width=\linewidth]{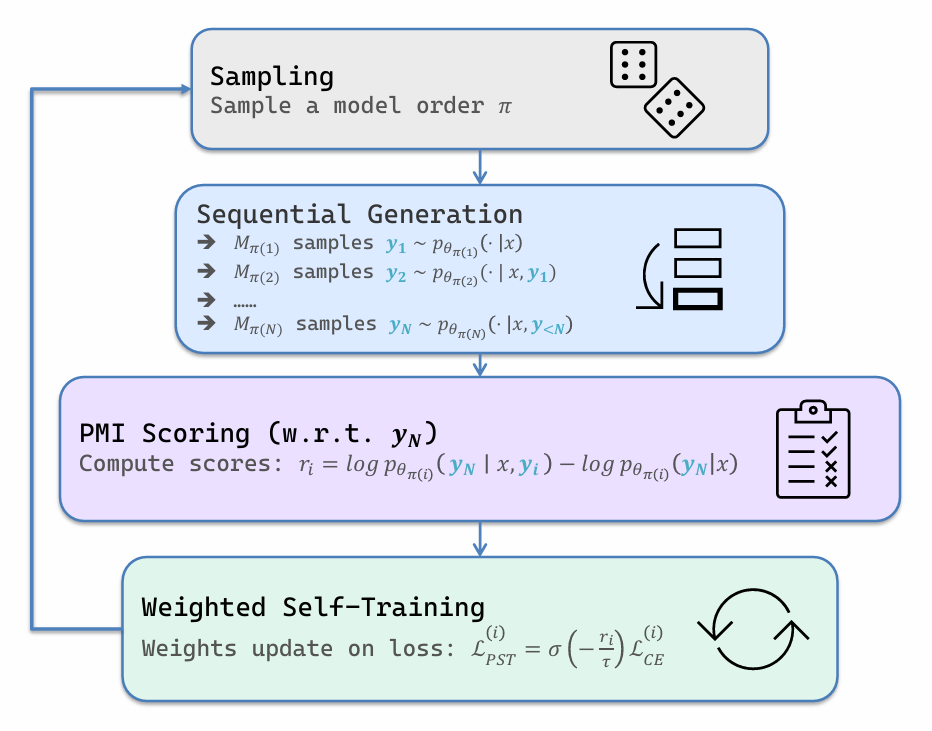}
\caption{\textbf{Overview of the PST framework.}
\textbf{Grey:} sample models and randomize their order.
\textbf{Blue:} generate responses sequentially conditioned on prior outputs, yielding $y_N$.
\textbf{Purple:} score each response by its contribution to predicting $y_N$.
\textbf{Green:} scale self-training updates using weights from these scores, correcting misaligned responses more.}
\label{fig:flow}
\end{wrapfigure}

A key insight from both learning theory and economics is that \emph{reliability can emerge from aggregation},
a phenomenon often referred to as the \emph{wisdom of crowds}.
Across domains, aggregating multiple independent or weakly correlated predictions can effectively pool dispersed
information and yield more accurate outcomes than relying on a single predictor
\citep{song2024mind,lu2025verification,Wol:04,Sur:04}.
In the context of language models, different models--or different reasoning trajectories produced by the same
model--may encode complementary partial information about the underlying solution, such that aggregation implicitly
filters noise while amplifying consistent and informative signals.
As a result, the aggregated output is frequently more reliable than most individual responses.
Recent theoretical work formalizes this intuition through the \emph{coverage principle}, which argues that post-training
or selection-based improvement is possible as long as the base model assigns non-negligible probability mass to correct
solutions, even if they are rare \citep{chen2025coverage}.
Under this view, the central challenge is not the absence of correct solutions, but how to identify and amplify them without access to ground truth \citep{zhou2020towards, zelikman2022star}.

This challenge is closely related to prior work on information elicitation under unverifiable outcomes, which studies
how informative signals can be extracted from cross-agent statistical structure without relying on ground truth \citep{miller2005eliciting, shnayder2016informed, kong2019information, kong2020dominantly, feng2022peer, chen2024carrot, chen2025datareliability}.
Rather than external labels, such approaches leverage mutual predictiveness across agents, and recent work shows that
mutual information--based signals can reliably evaluate LLM judgments in no-gold-standard settings
\citep{xubenchmarking}.
Taken together, these observations suggest a natural synthesis of aggregation and peer-based evaluation: a pool of
heterogeneous language models can not only aggregate dispersed beliefs into a stronger collective answer, but also
provide informative cross-model signals about answer quality even when no external supervision is available.

Building on this perspective, we propose \emph{Peer-Predictive Self-Training} (PST), a cooperative self-improvement
framework for language model reasoning.
Given a prompt, multiple language models generate responses sequentially, and the final aggregated response serves as a
crowd-based internal reference signal.
Rather than treating this aggregate as a hard pseudo-label, PST quantifies how informative each intermediate response is
about the final aggregate using pointwise mutual information (PMI).
This peer-predictive signal is then used to adaptively modulate the strength of self-training updates: responses that already align
with the collective decision receive smaller updates, while less informative or misaligned responses are updated more
strongly.

The key intuition behind PST is that aggregation induces a form of lightweight verification signal, as previously studied in in-context learning \citep{zhang2024study}.
By pooling diverse reasoning trajectories, the aggregated response often reflects a stronger evaluation signal than any
single generator alone, even in the absence of an explicit verifier.
Crucially, PST is not an aggregation method, but a mechanism for converting peer-predictive information into supervision signals.
PST leverages this verification--generation asymmetry to guide self-improvement, encouraging models to internalize
reasoning patterns aligned with the strongest implicit peer feedback.
Importantly, PST requires no ground-truth supervision, no reward models, and no fixed teacher--student hierarchy,
relying solely on cross-model interaction and peer predictiveness.

We evaluate PST on mathematical reasoning benchmarks including MATH-500-Numeric, MultiArith, and SimulEq, using three
heterogeneous instruction-tuned model families: Gemma-2-2B, LLaMA-3.2-1B, and Qwen2.5-1.5B. We intentionally focus on mathematical reasoning, as it provides a clean and objectively verifiable setting for studying self-improvement, allowing us to isolate the effect of the learning mechanism without confounding factors such as subjective evaluation.
Across all settings, PST yields consistent improvements of 2.2--4.3 percentage points in exact-match accuracy over the
original models and reduces the average generator--verifier gap by 26--40\%.
While the absolute gains are modest, they are consistent across models of different scales and mathematical reasoning datasets, and are achieved without any
external supervision or verifier training.
These results demonstrate that peer-predictive feedback provides a simple yet effective foundation for continued,
fully unsupervised self-improvement of language models on reasoning tasks. Additional experiments on larger-scale models, including Gemma 3 (4B), LLaMA 3.1 (8B), and Qwen2.5 (3B), are provided in Appendix~\ref{app:larger_scale}, where we observe similar improvements, further validating the effectiveness of \method\ at larger scales. We further provide additional baseline comparisons and epoch-wise ablations in Appendix~\ref{app:more_baselines} and Appendix~\ref{app:epoch_ablation}, showing that the gains are not explained by self-consistency, ordinary self-training, or majority-vote pseudo-labeling, and that improvement tends to saturate over repeated PST epochs.


\section{Related Work}

Recent work has examined the limits of self-improvement in language models through the lens of generator--verifier interactions. 
\citet{lu2025verification} show that generator--verifier similarity fundamentally constrains gains from self-verification, and that distributional diversity among models is critical for effective verification. 
This insight directly motivates the use of heterogeneous peers in our framework.

Beyond verifier-based selection, a growing body of work studies self-improvement without external supervision. 
Classical formulations of self-training and semi-supervised learning date back to bootstrapping and co-training. 
Co-training \citep{blum1998combining, nigam2000analyzing} exploits multiple views, while self-training and related variants \citep{yarowsky1995unsupervised, zhou2005tri} iteratively refine predictions from unlabeled data. 
Modern approaches such as pseudo-labeling and consistency-based methods \citep{lee2013pseudo, sohn2020fixmatch} scale these ideas to deep models. 
However, these methods rely on fixed views, confidence thresholds, or augmentation assumptions, and do not explicitly leverage adaptive cross-model interactions.
Classical self-training and bootstrapping methods on language models iteratively fine-tune models on their own predictions, but are prone to confirmation bias and error reinforcement when incorrect generations are amplified \citep{song2024mind, kadlvcik2024self}. 
Recent approaches mitigate this issue by relying on coverage assumptions, arguing that post-training is effective as long as the base model assigns non-negligible probability mass to correct solutions, even if they are rare \citep{chen2025coverage}. 
However, these methods typically operate in a single-model regime and do not explicitly exploit cross-model interactions. 
Recent work has also explored learning from multiple models or peer interactions. 
For example, \citet{wu2026reasoning} and \citet{luo2025learning} study how models can improve by leveraging signals from other models or agents, often through interaction or selection at inference time. 
These approaches highlight the potential of cross-model learning. 
In contrast, PST focuses on using peer agreement to derive a quantitative signal that can guide parameter updates during training.

Another related line of work explores collective behavior and multi-model interaction.
\citet{jin2025discovering, jiang2025artificial} study homogeneous populations of language models and show that open-ended interaction can induce emergent coordination, while also revealing limitations due to model similarity.
Debate-style or discussion-based methods, as well as aggregation-based methods such as majority voting and self-consistency \citep{xie2020self}, rely on explicit comparison rules or external judges to improve performance at inference time, but do not provide a training signal that can be used to update model parameters.
In contrast, Peer-Predictive Self-Training (PST) derives supervision implicitly from aggregation and peer predictiveness, without explicit debate, ranking, or teacher--student roles.

Our method is closely related to peer prediction and information elicitation mechanisms, which aim to recover truthful signals when ground truth is unavailable \citep{miller2005eliciting, kong2020dominantly}.
Recent work demonstrates that mutual information--based signals can reliably evaluate LLM judgments without gold labels \citep{xubenchmarking}.
PST operationalizes this idea in a training loop: pointwise mutual information quantifies how much a model's response contributes to a stronger collective decision, and this signal directly modulates the strength of self-training updates.

From a generator--verifier perspective, PST builds on analyses of generator--verifier gaps in LLM self-improvement \citep{song2024mind,sun2025theoretical,saad2025shrinking}.
Prior work emphasizes that improvement requires access to verifiers strictly stronger than the generator.
Recent work shows that even relatively weak verifiers can help reduce the generation--verification gap \citep{saad2025shrinking}.
Unlike approaches that assume an explicit verifier, PST induces a stronger evaluation signal through aggregation across peers.
Empirically, this leads to reduced generator--verifier gaps after training, and theoretically, to monotone convergence driven by cross-model capability asymmetries.

Finally, our theoretical motivation aligns with recent results on the computational asymmetry between generation and verification.
Verification can often be implemented by shallow or low-precision transformers, whereas long-horizon planning and generation are computationally harder \citep{merrill2023parallelism,wang2024alpine,wang2025benefits}.
PST leverages this asymmetry implicitly: even when individual models struggle to generate correct solutions, their combined verification signals can still provide a reliable self-supervised training signal.

\section{Peer-Predictive Self-Training (\method)}
\label{sec.model}

We first introduce our Peer-Predictive Self-Training (\method) framework for multi-model self-improvement. \method\ is guided by two ideas: (i) {\em wisdom of crowds} -- a pool of heterogeneous language models can aggregate dispersed ``beliefs'' and produce an aggregated answer that is usually stronger than individual responses, akin to how prediction markets aggregate information; (ii) {\em peer prediction} -- when no ground truth is available, cross-agent consistency still provides valuable information for answer quality. Concretely, \method\ scores each non-final model's response by how informative it is about the final aggregate via pointwise mutual information (PMI) and uses this peer-predictive signal to weight self-training updates. Figure~\ref{fig:flow} provides an overview of the PST pipeline.

{\bf Cross-model generation.} Let $\{M_k\}_{k=1}^N$ be a set of $N \ge 2$ autoregressive language models with parameters $\{\theta_k\}_{k=1}^N$. At the start of each epoch, we sample a random permutation $\pi$ of $\{1, \dots, N\}$. The model acting at position $i$ in the resulting order is $M_{\pi(i)}$ with parameters $\theta_{\pi(i)}$.

Given a question prompt $x$, the models produce responses sequentially.  
The $i$-th model $M_{\pi(i)}$ produces a response $y_i$ conditioned on $x$ and all 
previous responses $y_{<i} = (y_1,\ldots,y_{i-1})$:
\(
  y_i \sim p_{\theta_{\pi(i)}}(\cdot \mid x, y_{<i}). 
\)
We treat the final response $y_N$ as an ensemble-aggregated answer, since it is produced after observing all previous candidate solutions. 

{\bf Peer-predictive score via PMI.}
Taking $y_N$ as an internal reference signal, we want to quantify how much each intermediate response $y_i$ helps ``predict'' the final aggregate $y_N$.
For each non-final position $i\in\{1,\dots,N-1\}$, we compute a pointwise mutual information (PMI)~\cite{xubenchmarking} score under the same model $M_{\pi(i)}$ that produced $y_i$:
\[
  r_i= \log p_{\theta_{\pi(i)}}(y_N \mid x, y_i)- \log p_{\theta_{\pi(i)}}(y_N \mid x).
\]
Intuitively, $r_i$ is large when model $M_{\pi(i)}$'s own response $y_i$ strongly increases its likelihood of producing the final aggregate $y_N$, relative to its baseline from the prompt alone. 

\begin{algorithm}[htbp]
\caption{Peer-Predictive Self-Training}
\label{alg:pft_brief}
\begin{algorithmic}
\STATE \textbf{Input:} dataset $\mathcal{D}$; models $\{M_k\}$ with params $\{\theta_k\}$; temperature $\tau$; epochs $E$
\FOR{$e=1$ to $E$}
  \STATE Randomly generate a permutation $\pi$ of $\{1, 2, \dots N\}$
  \FOR{each prompt $x \in \mathcal{D}$}
    \STATE $y_{<1} \leftarrow \varnothing$
    \FOR{$i=1$ to $N$}
      \STATE sample $y_i \sim p_{\theta_{\pi(i)}}(\cdot \mid x, y_{<i})$
      \STATE $y_{<i+1} \leftarrow (y_{<i}, y_i)$
    \ENDFOR
    \STATE Let $y_N$ be the final aggregated response
    \FOR{$i=1$ to $N-1$}
      \STATE $r_i \leftarrow \log p_{\theta_{\pi(i)}}(y_N \mid x, y_i) - \log p_{\theta_{\pi(i)}}(y_N \mid x)$, 
      $\alpha_i \leftarrow \sigma(-r_i / \tau)$
      \STATE $\mathcal{L}^{(i)}_{\mathrm{CE}} \leftarrow -\sum_t \log p_{\theta_{\pi(i)}}(y_{i,t+1}\mid x, y_{<i}, y_{i,\le t})$, 
      $\mathcal{L}^{(i)}_{\mathrm{PST}} \leftarrow \alpha_i \cdot \mathcal{L}^{(i)}_{\mathrm{CE}}$
      \STATE Update parameters $\theta_{\pi(i)}$ by backpropagating $\mathcal{L}^{(i)}_{\mathrm{PST}}$ with AdamW 
    \ENDFOR
    \STATE Update $\theta_{\pi(N)}$ by standard next-token prediction on $y_N$
  \ENDFOR
\ENDFOR
\STATE \textbf{Output:} fine-tuned parameters $\{\theta_k\}$
\end{algorithmic}
\end{algorithm}


We convert $r_i$ into a non-negative update weight using a temperature-controlled sigmoid:
\[
  \alpha_i
  = \sigma\!\left( -\frac{r_i}{\tau} \right),
  \qquad
  \sigma(u)
  = \frac{1}{1 + e^{-u}},
\]
where temperature $\tau>0$ controls sensitivity.  
Higher PMI therefore yields smaller $\alpha_i$, resulting in a smaller update.  
The idea is that a model whose response already aligns with the aggregated 
solution should change less, which encourages a natural division of labor 
instead of redundant reasoning.

{\bf Weighted self-training.}
Each non-final model $M_{\pi(i)}$ is trained on its own generated trajectory $y_i = (y_{i,1},\ldots,y_{i,T_i})$ with next-token cross-entropy loss:
\[
  \mathcal{L}_{\mathrm{CE}}^{(i)}
  = -\sum_{t=1}^{T_i-1}
      \log p_{\theta_{\pi(i)}}\!\big(
         y_{i,t+1}
         \mid x, y_{<i}, y_{i,\le t}
      \big).
\]
We scale this loss by the peer-predictive weight:
\(
  \mathcal{L}_{\mathrm{PST}}^{(i)}= \alpha_i\, \mathcal{L}_{\mathrm{CE}}^{(i)}. 
\)
We then backpropagate through $\mathcal{L}_{\mathrm{PST}}^{(i)}$ and update parameters $\theta_{\pi(i)}$ (e.g., with AdamW). The final model is updated by standard next-token prediction on $y_N$. The permutation $\pi$ is resampled each epoch so that different models take on different positions--including the final ``aggregator'' position--over training. Algorithm \ref{alg:pft_brief} summarizes the \method\ framework. The final response $y_N$ can be viewed as an aggregated answer, as the last model conditions on all previous responses and thus integrates information across models without requiring an external aggregation module. Randomizing the permutation ensures that each model is equally likely to appear at any position in expectation, mitigating position bias.


\section{Experiments}
\label{sec.exp}

\begin{figure*}[t]
  \centering
    \begin{subfigure}[t]{0.32\linewidth}
    \centering
    \includegraphics[width=\linewidth]{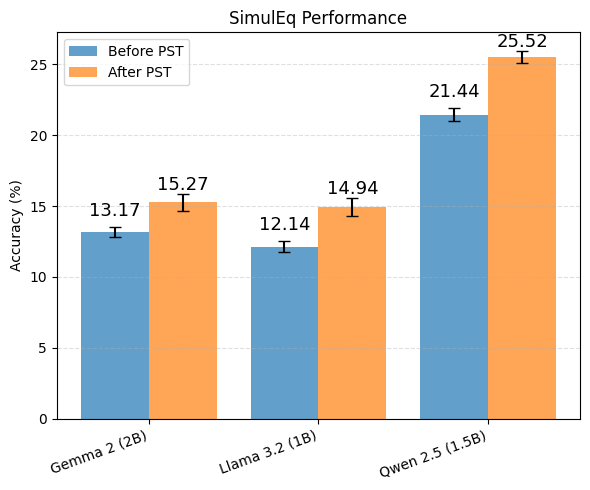}
    \caption{SimulEq}
    \label{fig:accuracy_simuleq}
  \end{subfigure}
   \hfill
  \begin{subfigure}[t]{0.32\linewidth}
    \centering
    \includegraphics[width=\linewidth]{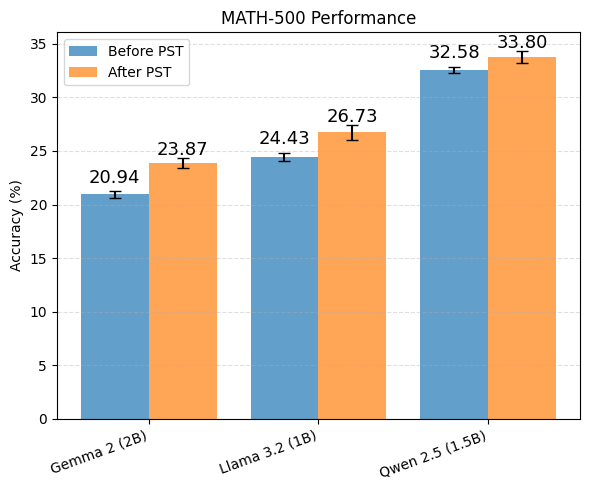}
    \caption{MATH-500-Numeric}
    \label{fig:accuracy_math500}
  \end{subfigure}
  \hfill
  \begin{subfigure}[t]{0.32\linewidth}
    \centering
    \includegraphics[width=\linewidth]{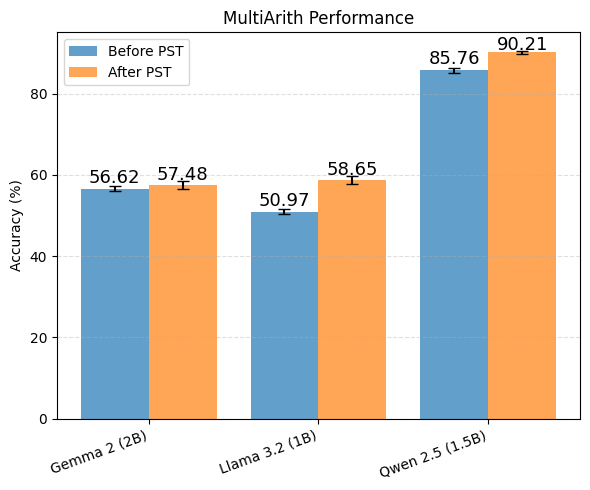}
    \caption{MultiArith}
    \label{fig:accuracy_multiarith}
  \end{subfigure}

  \caption{
  Accuracy before and after fine-tuning on SimulEq, MATH-500-Numeric, and MultiArith.
  Each panel reports pre- and post-fine-tuning performance under \method\ for
  Gemma-2-2B,  LLaMA-3.2-1B, and Qwen2.5-1.5B.
  }
  \label{fig:accuracy_all}
\end{figure*}

\paragraph{Experimental Setup.}
We evaluate PST using three instruction-tuned LMs with different architectures and sizes: 
\textbf{Gemma-2-2B-Instruct}~\cite{gemma2_2b_it},
\textbf{LLaMA-3.2-1B-Instruct}~\cite{llama32_1b_instruct},
and \textbf{Qwen2.5-1.5B-Instruct}~\cite{qwen25_15b_instruct}.

We fine-tune each base model using LoRA adapters~\cite{hu2022lora},
while keeping all original model parameters frozen.
Training updates are applied exclusively to the LoRA parameters.

Model optimization is carried out using AdamW~\cite{loshchilov2019adamw}
with a learning rate of $1\times10^{-6}$ and weight decay $0.01$.
We adopt a cosine-annealing learning-rate schedule, with the maximum
annealing horizon set to the total number of training epochs.
Forward computation is performed under bfloat16 automatic mixed
precision.

For each training instance, models generate responses sequentially.
PMI scores $r_i$ are computed for all non-final models, and the training
objective for each non-final model $M_{\pi(i)}$ is defined as a scaled next-token
cross-entropy loss $\alpha_i \mathcal{L}_{\mathrm{CE}}$, where
$\alpha_i = \sigma(-r_i / \tau)$ and $\tau = 3.0$.
The final model is updated by standard next-token prediction on the
aggregated response it produces.
Gradients are clipped to a global norm of $1.0$, and optimization steps
are skipped whenever non-finite gradients are detected.
At the beginning of each epoch, the ordering of models is reshuffled so
that different models assume the final (aggregator) role across epochs.
Each model maintains its own optimizer and scheduler, and training is
conducted for a total of $5$ epochs.

\paragraph{Datasets.}
We consider 3 standard reasoning benchmarks:

\begin{itemize}[leftmargin=*]
\item \textbf{SimulEq}~\cite{kushman2014learning}, 
    a collection of problems focused on solving systems of simultaneous equations of variable complexity.
    \item \textbf{MATH-500-Numeric}~\cite{hendrycks2021measuring}, 
    a challenging benchmark of mathematical problem solving derived from the broader MATH dataset
    that contains 500 problems with integer answers.

    \item \textbf{MultiArith}~\cite{roy2016solving}, 
    a dataset of multi-step arithmetic word problems requiring combination of basic operations such as
    addition, subtraction, multiplication, and division.
\end{itemize}

These benchmarks provide a clean and verifiable setting for isolating the effect of the learning mechanism.
All datasets are formatted uniformly as
\texttt{Question: <problem>} and presented to all models sequentially.
We record each model's raw response (given only the question as a prompt)
before and after fine-tuning using \method, and compute the exact-match
accuracy for each individual model, with error bars.

\paragraph{Evaluation and Figures.}
After five epochs of \method\  fine-tuning, we evaluate model performance on  
SimulEq, MATH-500-Numeric, and MultiArith.  
Figure~\ref{fig:accuracy_all} shows paired bars representing  
pre- and post-fine-tuning accuracy for each model.  

In all benchmarks, fine-tuning using \method\ consistently improves exact-match accuracy across all models.
Averaged over models, \method\ yields a mean accuracy improvement of approximately 3.0 percentage points on SimulEq, 2.2 percentage points on MATH-500-Numeric, and 4.3 percentage points on MultiArith.
These gains are obtained using standard fine-tuning settings without external supervision, indicating that PMI-based modulation alone provides a stable and effective training signal. Together, these results indicate that \method\ effectively enhances reasoning
reliability across heterogeneous models by allocating larger updates to  
less informative responses and smaller updates to highly predictive ones.

\section{Why Does \method\  Work?}
\label{sec:why-pst}

\subsection{Intuition}
\label{subsec:intuition}

\method\ can be viewed as self-training with an \emph{endogenously improved} reference signal and an \emph{information-weighted} update rule. For each prompt, multiple models generate candidate solutions sequentially, and the final response (conditioned on all prior candidates) acts as an aggregated reference. \method\ then updates each model on its own trajectory, but scales the update magnitude using a PMI-based signal. Intuitively, if a model's response already makes the aggregate highly likely (high PMI), then it is already aligned with the crowd and should change little; if it is weakly predictive or misaligned (low PMI), the model receives a larger update, pushing it toward behaviors that better match the aggregated decision.


A helpful lens is to view each model as playing both a generator and a
verifier role. During sequential generation, model $M_{\pi(i)}$ produces
\(
y_i \sim p_{\theta_{\pi(i)}}(\cdot \mid x, y_{<i}),
\)
where conditioning on $y_{<i}$ implicitly reflects an evaluation of earlier candidates before producing $y_i$. The final output $y_N$ pools information from all peers and can be interpreted as the result of applying the strongest available implicit verification signal within the group to the set of proposed candidates.

Why can $y_N$ be a stronger reference signal than any single model's output? As discussed in Appendix~\ref{sec.hardness}, verification and generation can differ sharply in computational difficulty: when abstracting a reasoning problem as a path-finding problem on a directed graph, verifying a proposed path can lie in a low-depth circuit class ($\mathrm{TC}^0$), while constructing a correct path can be substantially harder ($\mathrm{P}$- or $\mathrm{NL}$-hard). This verification--generation asymmetry implies that even when individual models struggle to \emph{generate} correct solutions from scratch, they may still collectively provide a comparatively strong \emph{evaluation} signal when multiple candidate solutions are available.



\method\ uses this aggregated output $y_N$ as a self-generated reference and
assigns each response $y_i$ a reward
\(
r_i = \mathrm{PMI}(y_N, y_i),
\)
which measures how informative $y_i$ is about the final decision.
Correct or on-track responses tend to have high PMI and therefore receive smaller corrective weights, since they are already predictive of the aggregate. Low-PMI responses are less aligned with the final decision and receive larger self-training weights, causing the corresponding models to adapt more strongly.

From a generator--verifier perspective, this training process encourages each
model to generate candidates that align with the strongest implicit verifier in
the peer group.
Equivalently, \method\ drives models to internalize the behavior of the best
available verification signal.
In \Cref{app:gv}, we quantify this effect using an empirical generator--verifier
(GV) gap metric and observe that \method\ consistently reduces this gap, indicating
improved generator--verifier alignment through peer-predictive self-training.

\subsection{Theoretical Analysis}
\label{subsec:theory}

We analyze \method\ at the level
of aggregate model capabilities, following the generator--verifier viewpoint of prior work
\cite{sun2025theoretical}.
Our objective is not to characterize the underlying optimization procedure, but
to understand how relative capability gaps drive improvement.

We consider $n$ models trained jointly.
For each model $i \in \{1,\dots,n\}$, we associate a scalar capability variable
\(
s_i(t) \in [0,1],
\)
representing its expected performance on the target task at time $t \ge 0$.
These variables are abstract and summarize population-level performance rather
than specific parameters or losses.

\method\ produces an effective self-reward signal by aggregating model outputs.
Rather than modeling individual verifiers explicitly, we represent this signal
by a single scalar
\(
v(t) \in [0,1],
\)
which captures the strength of the aggregated evaluation signal used for
training.
Motivated by the wisdom-of-the-crowd effect and the generator--verifier
asymmetry discussed in Appendix~\ref{sec.hardness}, we allow the aggregated
evaluation signal to be initially stronger than individual generators, i.e.,
\(
v(0) > \max_i s_i(0).
\)

Model improvement is assumed to be gap-driven: a model can only improve when
trained against an evaluation signal that is strictly stronger than itself.
In addition, learning saturates as capabilities approach their upper bound.
Under these assumptions, we model \method\ by the following dynamical system:
\begin{align}
\frac{d s_i}{dt}
=
\alpha (1 - s_i)\,[v - s_i]_+, 
\frac{d v}{dt}=
\beta (1 - v)\,\max_i s_i,
\label{eq:pst_dyn}
\end{align}
where $\alpha, \beta > 0$ are scaling constants and $[x]_+ := \max\{x,0\}$.
The first equation captures improvement driven by positive generator--verifier
gaps, while the second reflects that the aggregate evaluation signal strengthens
as generators improve and saturates near its maximum.

We now characterize the long-run behavior of the system.

\begin{theorem}[Convergence of \method\ dynamics]
\label{thm:pst_convergence}
Assume $s_i(0), v(0) \in [0,1]$ for all $i$.
Let $(s_1(t),\dots,s_n(t),v(t))$ follow~\eqref{eq:pst_dyn}.
Then:
\begin{enumerate}
\item For $\forall i$, $s_i(t)$ is monotone non-decreasing and
$s_i^\ast := \lim_{t \to \infty} s_i(t)$ exists with $s_i^\ast \ge s_i(0)$.
\item $v(t)$ is monotone non-decreasing and the limit
$v^\ast := \lim_{t \to \infty} v(t)$ exists with $v^\ast \ge v(0)$.
\item For any model $i$ with $s_i(0) < v^\ast$, we have $s_i^\ast = v^\ast$.
In particular, at convergence no model remains strictly below the limiting
evaluation signal unless it started at or above it.
\end{enumerate}
\end{theorem}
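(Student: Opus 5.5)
The plan is to treat \eqref{eq:pst_dyn} as an autonomous ODE on the box $[0,1]^{n+1}$. First I establish well-posedness and forward invariance of the box. Monotonicity and the existence of limits then follow from sign considerations. Part~3 comes from a contradiction argument that uses the fact that a bounded monotone function cannot keep increasing at a rate bounded away from zero.

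\textbf{Step 1 (well-posedness and invariance).} The maps $x\mapsto[x]_+$ and $(s_1,\dots,s_n)\mapsto\max_i s_i$ are globally Lipschitz, and products of Lipschitz functions are Lipschitz on bounded sets. So the vector field is locally Lipschitz, and by Picard--Lindel\"of there is a unique local solution. For invariance, note that both right-hand sides are $\ge 0$ whenever all coordinates lie in $[0,1]$. In particular $[v-s_i]_+\ge0$, and $1-s_i\ge 0$, $1-v\ge0$, $\max_i s_i\ge 0$ there. Hence the coordinates cannot decrease below their (nonnegative) initial values. To rule out leaving through the upper face, observe that $s_i\equiv 1$ and $v\equiv1$ are invariant for their respective equations. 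Uniqueness, applied coordinate-wise (or a comparison argument with $1-s_i$ satisfying a linear ODE $\frac{d}{dt}(1-s_i)=-\alpha[v-s_i]_+(1-s_i)$, so $1-s_i(t)=(1-s_i(0))\exp(-\int_0^t\alpha[v-s_i]_+)\ge0$), then shows $s_i,v$ stay $\le 1$. The same exponential formula works for $1-v$. Since the solution stays in a compact set, it extends to all $t\ge0$.

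\textbf{Step 2 (parts 1 and 2).} On the invariant box, $\dot s_i\ge0$ and $\dot v\ge0$, so each coordinate is non-decreasing and bounded by $1$. Therefore the limits $s_i^\ast$ and $v^\ast$ exist, with $s_i^\ast\ge s_i(0)$ and $v^\ast\ge v(0)$. Note that the hypothesis $v(0)>\max_i s_i(0)$ is not even needed for these parts.

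\textbf{Step 3 (part 3), the main step.} Fix $i$ with $s_i(0)<v^\ast$. I split the argument into an upper bound and a lower bound.

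First, $s_i^\ast\le v^\ast$. Suppose some $t$ had $s_i(t)>v^\ast$, and let $t_0=\inf\{t: s_i(t)>v^\ast\}$. Then $s_i(t_0)=v^\ast\ge v(t)$ for all $t$. Hence $\dot s_i=0$ whenever $s_i\ge v^\ast$. By uniqueness (or a Gr\"onwall argument on $(s_i-v^\ast)_+$), $s_i$ cannot exceed $v^\ast$, which gives a contradiction.

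Second, suppose $s_i^\ast<v^\ast$ and set $\delta=(v^\ast-s_i^\ast)/2>0$. Since $v(t)\uparrow v^\ast$, there is $T$ with $v(t)\ge v^\ast-\delta$ for $t\ge T$. Since $s_i(t)\le s_i^\ast$ for all $t$, we get $v(t)-s_i(t)\ge\delta$ for $t\ge T$. Moreover $1-s_i(t)\ge 1-s_i^\ast>0$ because $s_i^\ast<v^\ast\le1$. So $\dot s_i(t)\ge\alpha\delta(1-s_i^\ast)>0$ for all $t\ge T$, which forces $s_i(t)\to\infty$. This contradicts $s_i\le1$.

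Combining the two bounds gives $s_i^\ast=v^\ast$. The main care point is the first bound, namely preventing $s_i$ from overshooting $v^\ast$. I will handle it via the observation that $\dot s_i$ vanishes on $\{s_i\ge v\}\supseteq\{s_i\ge v^\ast\}$, together with uniqueness. The remaining steps are routine monotone-convergence bookkeeping.
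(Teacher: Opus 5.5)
Your proposal is correct and follows essentially the same route as the paper. Nonnegativity of the right-hand sides gives monotonicity and invariance of $[0,1]$, monotone convergence gives the limits, and part~3 is a contradiction from $\dot s_i$ being bounded below by a positive constant. Your use of $s_i(t)\le s_i^\ast$ (from monotonicity) slightly streamlines the paper's $\delta=(v^\ast-s_i^\ast)/3$ bookkeeping, and your exponential formula for $1-s_i$ and $1-v$ puts the paper's informal ``derivative vanishes at the boundary'' invariance argument on firmer footing.

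Keep your separate upper bound $s_i^\ast\le v^\ast$. The paper's proof only rules out $s_i^\ast<v^\ast$ and then asserts equality, without ever using the hypothesis $s_i(0)<v^\ast$. Your observation supplies exactly the missing step: $\dot s_i=0$ once $s_i\ge v^\ast\ge v(t)$, and combined with monotonicity this forces $s_i$ to stay constant after reaching $v^\ast$.
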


The dynamics induced by \method\ are governed by monotonicity and boundedness.
Each solver's capability increases whenever it falls below the current
aggregated evaluation signal and is upper bounded, ensuring convergence.
The aggregated signal itself evolves monotonically as a function of the
strongest solver.
Intuitively, any solver that were to remain strictly below the limiting
aggregation level would continue to receive a positive update signal,
making such a gap unsustainable at convergence.
As a result, \method\ induces monotone capability improvement driven by gaps
relative to a strong aggregated evaluation signal, and converges to a
saturated state in which self-reward no longer provides additional learning
signal.
A formal proof is provided in Appendix~\ref{app.proof_pst_convergence}.

Empirically, we observe qualitative behavior consistent with this gap-driven perspective.
Across all datasets, the row-wise maximum GV gaps--which capture the strongest verifier available to each generator--consistently decrease after training.
This indicates that generators progressively align with stronger peer verifiers, reducing the gap between their own capability and the strongest available evaluation signal.
In contrast, single-model self-training is limited to the generator's own single-sample behavior and does not exploit cross-verifier selection signals.
Detailed definitions, metrics, and full empirical results are provided in \Cref{app:gv}, which further show that the observed improvements go beyond what standard self-training can achieve.
These results provide empirical support for the monotone improvement dynamics and convergence behavior predicted by our analysis.

\arxiv{\subsection{Empirical Improvements on GV-Gap}
\label{app:gv}

\begin{figure}[htbp]
    \centering
    \begin{subfigure}[b]{0.32\linewidth}
        \includegraphics[width=\linewidth]{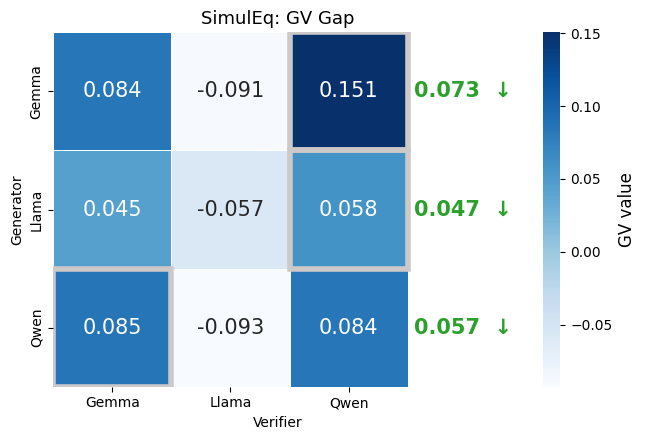}
        \caption{SimulEq}
    \end{subfigure}
    \hfill
    \begin{subfigure}[b]{0.32\linewidth}
        \includegraphics[width=\linewidth]{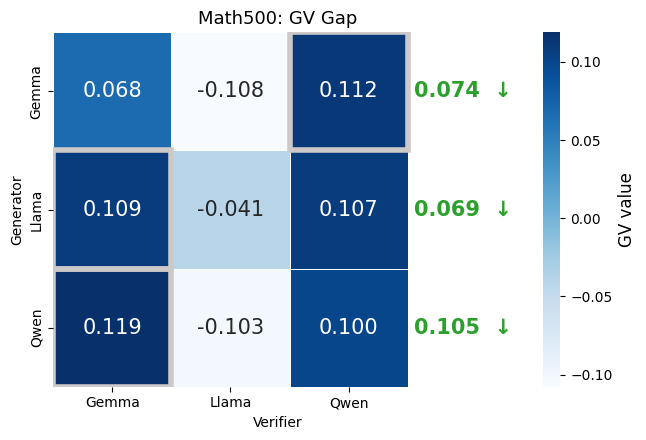}
        \caption{MATH-500-Numeric}
    \end{subfigure}
    \hfill
    \begin{subfigure}[b]{0.32\linewidth}
        \includegraphics[width=\linewidth]{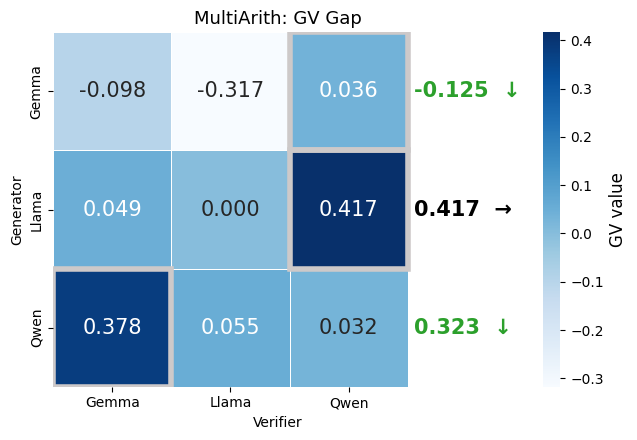}
        \caption{MultiArith}
    \end{subfigure}

    \caption{
    Initial GV matrices on SimulEq, MATH-500-Numeric, and MultiArith, computed using
    initial models as both generators and verifiers.
    Row-wise maxima at initialization are highlighted by gray boxes.
    The \textcolor{ForestGreen}{green} numbers on the right report the
    corresponding row-wise maximum GV gaps after fine-tuning, computed
    between fine-tuned generators and fine-tuned verifiers
    (\textcolor{ForestGreen}{green} indicates a decrease, while black indicates no change).
    }
    \label{fig:gv_initial}
\end{figure}

The analysis above predicts that the gains of \method\ arise from
cross-model interactions, in which stronger verifiers provide a
cleaner training signal.

Empirically, for a generator $M_g$ and a verifier $M_v$, given a
question--answer pair $(x,a)$, we sample $k$ candidate responses
$\{y^{(1)},\dots,y^{(k)}\}\sim p_{M_g}(\cdot\mid x)$ from the generator.
Let $y^{(1)}$ denote the generator's ordinary single-sample output. The
verifier selects the highest-scoring candidate from the same candidate set,
and we define the generator--verifier gap on $x$ as
\begin{align*}
y_v^\star
&= \arg\max_{t\in[k]} s_{M_v}(x,y^{(t)}),\\
\mathrm{GV\text{-}Gap}(M_g,M_v\mid x)
&\triangleq
\mathbf{1}\{\mathrm{Ans}(y_v^\star)=a\}
-
\mathbf{1}\{\mathrm{Ans}(y^{(1)})=a\}.
\end{align*}
Thus, the GV gap measures the additional correctness obtained by
verifier-based selection over the generator's single-sample output.

Here $s_{M_v}(x,y)\in\mathbb{R}$ denotes the scalar score assigned by the
verifier model $M_v$ to a completed response $y$ for input $x$. The verifier
is run in evaluation mode, performing deterministic forward passes on fixed
input--response pairs without sampling, dropout, or parameter updates. The
score is computed as a sequence-level value by summing token-level
log-probabilities over the response tokens and is used only to rank candidate
responses, following the generator--verifier framework of
\cite{song2024mind}. We report the dataset-level GV gap by averaging
$\mathrm{GV\text{-}Gap}(M_g,M_v\mid x)$ over prompts $x$.

Figure~\ref{fig:gv_initial} shows the initial GV matrices on SimulEq,
MATH-500-Numeric, and MultiArith.
Rows correspond to generators and columns to verifiers.
For each generator, the row-wise maximum GV gap at initialization is
highlighted by a gray box.
In most cases, this maximum occurs off the diagonal and is strictly
positive, indicating that a generator aligns more strongly with another
model acting as a verifier than with itself.
This reveals systematic cross-model asymmetries that \method\ can
exploit during training, suggesting that peer interaction provides
stronger supervision than self-training within a single model. 
This aligns with prior findings in \citep{song2024mind, lu2025verification} that cross-verification from diverse models is effective.

The number shown to the right of each row reports the corresponding
row-wise maximum GV gap after fine-tuning, where fine-tuned models are
used as both generators and verifiers.
Across all datasets, these values consistently decrease after training.
Specifically, the sum of row-wise maximum GV gaps is reduced by approximately 39.8\% on SimulEq, 27.1\% on MATH-500-Numeric, and 26.0\% on MultiArith, indicating that generators internalize strategies aligned with the strongest verifiers.
In contrast, standard self-training relies on each model's own predictions and evaluations, whose generator--verifier value is inherently bounded by its own capability and thus cannot match the maximum GV gap achieved by stronger peer models.
By leveraging cross-model interactions, PST allows each generator to align with signals induced by stronger verifiers, leading to systematically reduced generator--verifier gaps.
This behavior is consistent with our theoretical interpretation, in
which probability mass is progressively reallocated toward correct
strategies, and the capability gap between each generator and the
strongest verifier narrows over training.}

\section{Ablation Study}

To isolate the effect of \method\ from cross-model knowledge transfer and to
ensure that the observed gains are not simply the consequence of a stronger
model guiding a weaker one, we conduct an ablation study using two models
from the same family with different scales:
\textbf{LLaMA-3.2-1B-Instruct} and \textbf{LLaMA-3.2-3B-Instruct}.
Both models are jointly fine-tuned on MATH-500-Numeric following the
\method\ procedure described in \Cref{sec.model}, including PMI-based
modulation, model-order shuffling across epochs, and the same optimization
setup.

\begin{figure*}[t]
\centering

\begin{subfigure}[t]{0.4\linewidth}
    \centering
    \includegraphics[width=\linewidth]{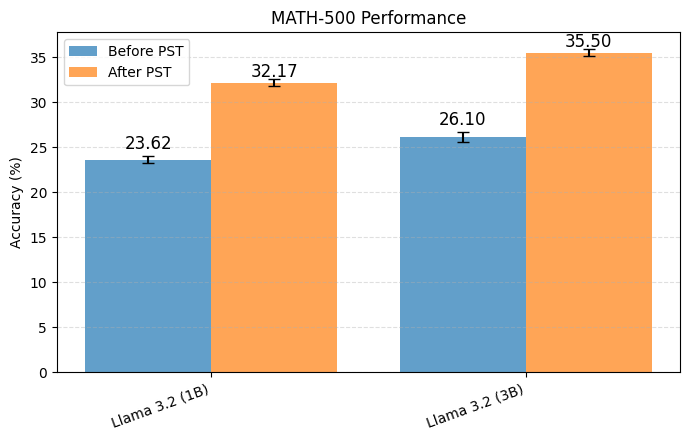}
    \caption{
    Ablation study on MATH-500-Numeric using LLaMA 3.2 models.
    }
    \label{fig:ablation-math500}
\end{subfigure}
\hfill
\begin{subfigure}[t]{0.4\linewidth}
    \centering
    \includegraphics[width=\linewidth]{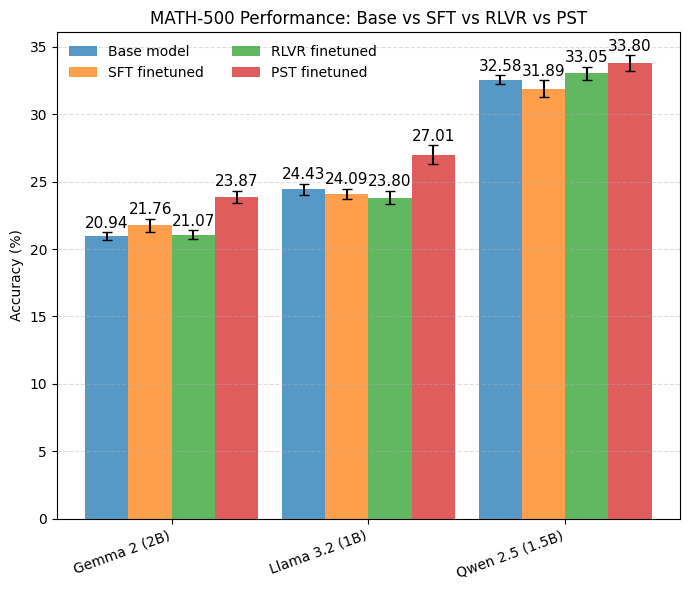}
    \caption{
    Comparison on MATH-500 between SFT, GRPO, and \method.
    }
    \label{fig:baseline-math500}
\end{subfigure}
\hfill
\begin{subfigure}[t]{0.4\linewidth}
    \centering
    \includegraphics[width=\linewidth]{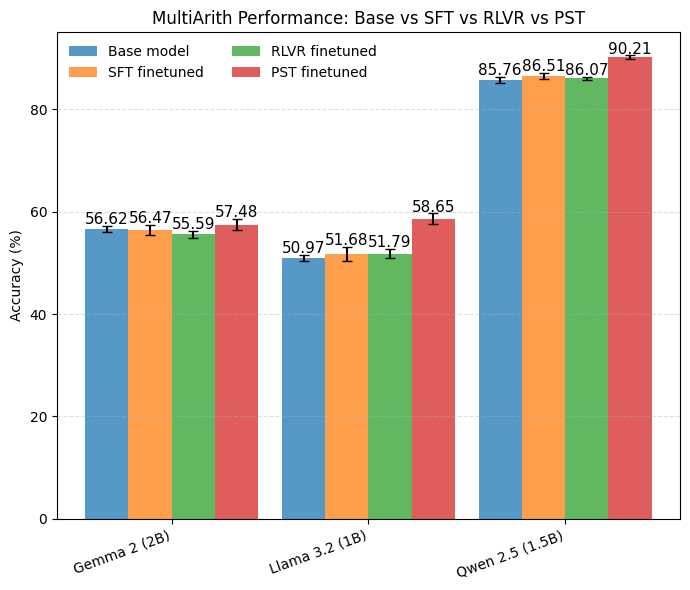}
    \caption{
    Comparison on MultiArith between SFT, GRPO, and \method.
    }
    \label{fig:baseline-multiarith}
\end{subfigure}
\hfill
\begin{subfigure}[t]{0.4\linewidth}
    \centering
    \includegraphics[width=\linewidth]{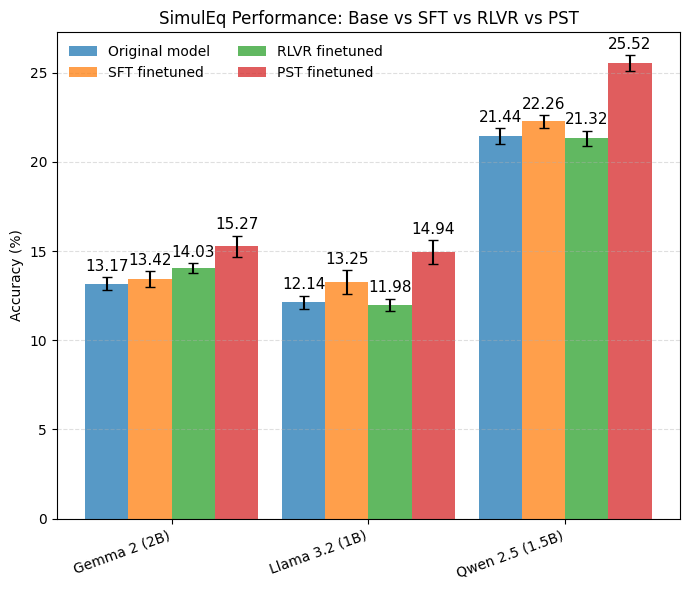}
    \caption{
    Comparison on SimulEq between SFT, GRPO, and \method.
    }
    \label{fig:baseline-simuleq}
\end{subfigure}

\caption{
Results across model scales and benchmarks.
}
\label{fig:all-four}
\end{figure*}

Figure~\ref{fig:ablation-math500} reports the pre-\method\ and
post-\method\ accuracies for both models.
We observe consistent improvements across the two scales.
The 1B model shows a substantial increase in accuracy, and the stronger
3B model also improves despite its much higher initial performance.
This observation suggests that \method\ does not rely on a teacher-like
relationship between weaker and stronger models.
Instead, both models benefit from the PMI-guided multi-model training loop,
which appears to amplify helpful reasoning trajectories already present in
each model and to reduce the impact of uninformative steps.
The fact that both models improve together suggests that \method\ functions
as a cooperative self-refinement process rather than a form of implicit
distillation.

To ensure that the observed gains of \method\ are not simply due to additional
fine-tuning steps or exposure to external supervision, we further compare
against strong single-model fine-tuning baselines constructed under a matched
optimization budget.
Specifically, we sample 500 training instances from
OpenMathInstruct-2~\cite{openmathinstruct2} and fine-tune each model
independently using either supervised fine-tuning (SFT)~\cite{ouyang2022training}
or group relative policy optimization (GRPO)~\cite{shao2024deepseekmath}.
All baselines are initialized from the same instruction-tuned checkpoints
as \method\ and are trained using the same optimizer, learning-rate schedule,
number of updates, batch size, and decoding configuration.
Crucially, baseline training does not involve any multi-model interaction,
PMI-based modulation, or cross-model feedback.

After baseline fine-tuning, all models are evaluated on the same three
downstream numerical reasoning benchmarks: MATH-500-Numeric, MultiArith,
and SimulEq.
The SFT and GRPO baselines are trained only on OpenMathInstruct-2 and do not
use labels from the target benchmarks.
In contrast, \method\ uses only unlabeled prompts from the target benchmark
and never accesses ground-truth answers.
Thus, our comparison evaluates whether label-free, transductive
self-improvement through cross-model interaction can outperform standard
single-model fine-tuning with external supervised or reward-based training
data.

Figures~\ref{fig:baseline-math500},~\ref{fig:baseline-multiarith}, and
\ref{fig:baseline-simuleq} compare the performance of these baselines with
models trained using \method.
Across all three benchmarks, both supervised fine-tuning (SFT) and group
relative policy optimization (GRPO) lead to only modest or inconsistent gains
over the original instruction-tuned models, and their effects vary across
model families and datasets.
In contrast, \method\ consistently improves accuracy across all models and
benchmarks, even when compared against the stronger GRPO baseline.
This consistent gap suggests that the gains achieved by \method\ cannot be
explained by improved single-model optimization or better reward shaping alone.
Instead, they arise from the cooperative, PMI-guided multi-model training
dynamics in \method, which leverage cross-model predictiveness to amplify
informative reasoning trajectories while suppressing uninformative ones.

We provide additional ablations in Appendix~\ref{app:more_baselines} and
Appendix~\ref{app:epoch_ablation}.
These include comparisons with self-consistency, direct self-training,
majority-vote ensemble baselines, and majority-vote self-training, as well as
epoch-wise trajectories from Epoch 1 to Epoch 5.
The additional results show that \method\ improves not only single-sample
accuracy but also sampled-solution quality and ensemble performance, while the
epoch-wise study suggests stable but gradually saturating gains over repeated
\method\ updates.

\section{Conclusion and Future Directions}

We propose Peer-Predictive Self-Training (PST), a fully unsupervised
framework for continued self-improvement of language models based on
cross-model interaction.
By aggregating sequential generations from heterogeneous models and
using PMI-based modulation to scale self-training updates, PST enables
models to leverage peer predictiveness as an internal supervisory signal.
Across multiple mathematical reasoning benchmarks, PST consistently
improves exact-match accuracy and reduces the generator--verifier gap,
without relying on external supervision, reward models, or fixed
teacher--student hierarchies.

More broadly, our results show that peer-predictive feedback enables
stable self-improvement in reasoning-oriented tasks by providing a
reliable source of internal supervision through aggregation and mutual
predictiveness, even without ground truth.

These results also suggest several directions for future work: scaling PST to
larger models and datasets, evaluating generalization on held-out prompts,
studying deeper multi-round interactions and stability criteria, extending the
approach beyond mathematical reasoning, and exploring alternative
peer-predictive modulation schemes.
\newpage



\bibliography{ref}
\bibliographystyle{colm2026_conference}

\newpage
\appendix
\onecolumn
\section*{Appendices}

\section{Proof of \texorpdfstring{\Cref{thm:pst_convergence}}{Theorem 5.1}}
\label{app.proof_pst_convergence}

\begin{proof}
For any $i$, the right-hand side of~\eqref{eq:pst_dyn} satisfies
$\frac{ds_i}{dt} \ge 0$ because $(1-s_i)\ge 0$ on $[0,1]$ and $[v-s_i]_+ \ge 0$.
Hence $s_i(t)$ is monotone non-decreasing.
Moreover, if $s_i(t)=1$ then $\frac{ds_i}{dt}=0$, so $s_i(t)$ cannot exceed $1$.
If $s_i(t)=0$ then $\frac{ds_i}{dt}\ge 0$, so it cannot become negative.
Thus $s_i(t)\in[0,1]$ for all $t\ge 0$.

Similarly, $\frac{dv}{dt} = \beta(1-v)\max_i s_i \ge 0$ whenever $v\in[0,1]$,
so $v(t)$ is monotone non-decreasing.
Also, if $v(t)=1$ then $\frac{dv}{dt}=0$, so $v(t)$ cannot exceed $1$; if $v(t)=0$
then $\frac{dv}{dt}\ge 0$, so it cannot become negative.
Thus $v(t)\in[0,1]$ for all $t\ge 0$.

Since each $s_i(t)$ is monotone non-decreasing and bounded in $[0,1]$, the limit
$s_i^\ast=\lim_{t\to\infty}s_i(t)$ exists for every $i$.
Likewise, since $v(t)$ is monotone non-decreasing and bounded in $[0,1]$, the
limit $v^\ast=\lim_{t\to\infty}v(t)$ exists.

Fix any model $i$ with $s_i(0) < v^\ast$.
Suppose for contradiction that $s_i^\ast < v^\ast$.
Let $\delta := \frac{1}{3}(v^\ast - s_i^\ast) > 0$.
By convergence, there exists $T$ such that for all $t\ge T$,
\[
v(t) \ge v^\ast - \delta
\quad\text{and}\quad
s_i(t) \le s_i^\ast + \delta.
\]
Therefore, for all $t\ge T$,
\[
v(t) - s_i(t) \ge (v^\ast - \delta) - (s_i^\ast + \delta)
= (v^\ast - s_i^\ast) - 2\delta
= \delta,
\]
so $[v(t)-s_i(t)]_+ \ge \delta$.
Also, since $s_i(t)\le s_i^\ast+\delta < 1$ for $t$ large enough (because
$s_i^\ast\le 1$ and $\delta>0$), we have $1-s_i(t) \ge 1-(s_i^\ast+\delta) =: c > 0$
for all sufficiently large $t$.
Plugging into~\eqref{eq:pst_dyn} yields, for all sufficiently large $t$,
\[
\frac{ds_i}{dt} \;\ge\; \alpha\,c\,\delta \;>\; 0,
\]
which implies that $s_i(t)$ must increase by at least $\alpha c\delta\,(t-T)$ for
$t\ge T$, contradicting the existence of the finite limit $s_i^\ast$.
Hence the assumption $s_i^\ast < v^\ast$ is impossible, and we conclude
$s_i^\ast = v^\ast$.

This proves the third claim and completes the proof.
\end{proof}

\section{Additional Experiments and Details}

\submission{\subsection{Empirical Improvements on GV-Gap}
\label{app:gv}

\begin{figure}[htbp]
    \centering
    \begin{subfigure}[b]{0.32\linewidth}
        \includegraphics[width=\linewidth]{images/simuleq-gv.png}
        \caption{SimulEq}
    \end{subfigure}
    \hfill
    \begin{subfigure}[b]{0.32\linewidth}
        \includegraphics[width=\linewidth]{images/math500-gv.png}
        \caption{MATH-500-Numeric}
    \end{subfigure}
    \hfill
    \begin{subfigure}[b]{0.32\linewidth}
        \includegraphics[width=\linewidth]{images/multiarith-gv.png}
        \caption{MultiArith}
    \end{subfigure}

    \caption{
    Initial GV matrices on SimulEq, MATH-500-Numeric, and MultiArith, computed using
    initial models as both generators and verifiers.
    Row-wise maxima at initialization are highlighted by gray boxes.
    The \textcolor{ForestGreen}{green} numbers on the right report the
    corresponding row-wise maximum GV gaps after fine-tuning, computed
    between fine-tuned generators and fine-tuned verifiers
    (\textcolor{ForestGreen}{green} indicates a decrease, while black indicates no change).
    }
    \label{fig:gv_initial}
\end{figure}

The analysis above predicts that the gains of \method\ arise from
cross-model interactions, in which stronger verifiers provide a
cleaner training signal.

Empirically, for a generator $M_g$ and a verifier $M_v$, given a
question--answer pair $(x,a)$, we sample $k$ candidate responses
$\{y^{(1)},\dots,y^{(k)}\}\sim p_{M_g}(\cdot\mid x)$ from the generator.
Let $y^{(1)}$ denote the generator's ordinary single-sample output. The
verifier selects the highest-scoring candidate from the same candidate set,
and we define the generator--verifier gap on $x$ as
\begin{align*}
y_v^\star
&= \arg\max_{t\in[k]} s_{M_v}(x,y^{(t)}),\\
\mathrm{GV\text{-}Gap}(M_g,M_v\mid x)
&\triangleq
\mathbf{1}\{\mathrm{Ans}(y_v^\star)=a\}
-
\mathbf{1}\{\mathrm{Ans}(y^{(1)})=a\}.
\end{align*}
Thus, the GV gap measures the additional correctness obtained by
verifier-based selection over the generator's single-sample output.

Here $s_{M_v}(x,y)\in\mathbb{R}$ denotes the scalar score assigned by the
verifier model $M_v$ to a completed response $y$ for input $x$. The verifier
is run in evaluation mode, performing deterministic forward passes on fixed
input--response pairs without sampling, dropout, or parameter updates. The
score is computed as a sequence-level value by summing token-level
log-probabilities over the response tokens and is used only to rank candidate
responses, following the generator--verifier framework of
\cite{song2024mind}. We report the dataset-level GV gap by averaging
$\mathrm{GV\text{-}Gap}(M_g,M_v\mid x)$ over prompts $x$.

Figure~\ref{fig:gv_initial} shows the initial GV matrices on SimulEq,
MATH-500-Numeric, and MultiArith.
Rows correspond to generators and columns to verifiers.
For each generator, the row-wise maximum GV gap at initialization is
highlighted by a gray box.
In most cases, this maximum occurs off the diagonal and is strictly
positive, indicating that a generator aligns more strongly with another
model acting as a verifier than with itself.
This reveals systematic cross-model asymmetries that \method\ can
exploit during training, suggesting that peer interaction provides
stronger supervision than self-training within a single model. 
This aligns with prior findings in \citep{song2024mind, lu2025verification} that cross-verification from diverse models is effective.

The number shown to the right of each row reports the corresponding
row-wise maximum GV gap after fine-tuning, where fine-tuned models are
used as both generators and verifiers.
Across all datasets, these values consistently decrease after training.
Specifically, the sum of row-wise maximum GV gaps is reduced by approximately 39.8\% on SimulEq, 27.1\% on MATH-500-Numeric, and 26.0\% on MultiArith, indicating that generators internalize strategies aligned with the strongest verifiers.
In contrast, standard self-training relies on each model's own predictions and evaluations, whose generator--verifier value is inherently bounded by its own capability and thus cannot match the maximum GV gap achieved by stronger peer models.
By leveraging cross-model interactions, PST allows each generator to align with signals induced by stronger verifiers, leading to systematically reduced generator--verifier gaps.
This behavior is consistent with our theoretical interpretation, in
which probability mass is progressively reallocated toward correct
strategies, and the capability gap between each generator and the
strongest verifier narrows over training. }

\subsection{Experiments on Larger-Scale LLMs}
\label{app:larger_scale}

To further evaluate the scalability of PST, we conduct additional experiments on larger-scale language models.
Except for the use of larger models, all experimental settings remain identical to those described in \Cref{sec.exp} and \Cref{app:repro_details}, including datasets, training procedure, optimization, and evaluation protocols.

Specifically, we follow the same sequential generation process, PMI-based modulation, and LoRA-based fine-tuning setup as in the main experiments.

\begin{figure*}[t]
  \centering
    \begin{subfigure}[t]{0.32\linewidth}
    \centering
    \includegraphics[width=\linewidth]{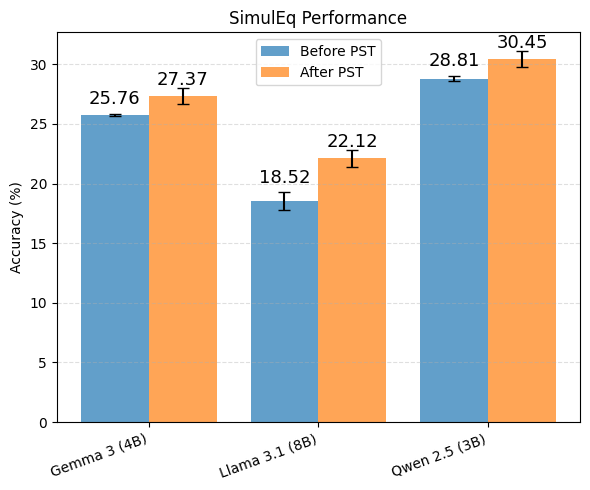}
    \caption{SimulEq}
  \end{subfigure}
   \hfill
  \begin{subfigure}[t]{0.32\linewidth}
    \centering
    \includegraphics[width=\linewidth]{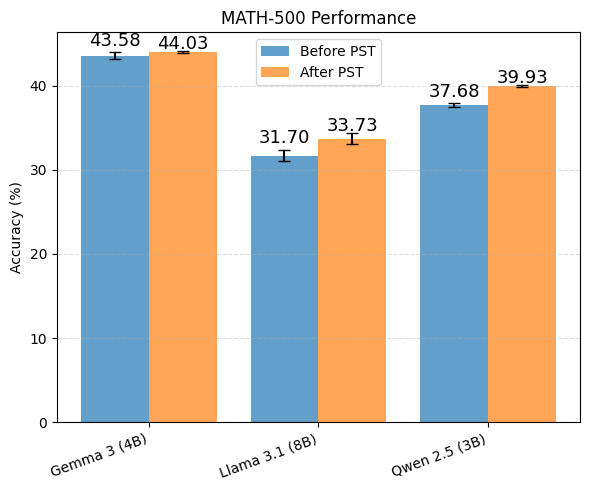}
    \caption{MATH-500-Numeric}
  \end{subfigure}
  \hfill
  \begin{subfigure}[t]{0.32\linewidth}
    \centering
    \includegraphics[width=\linewidth]{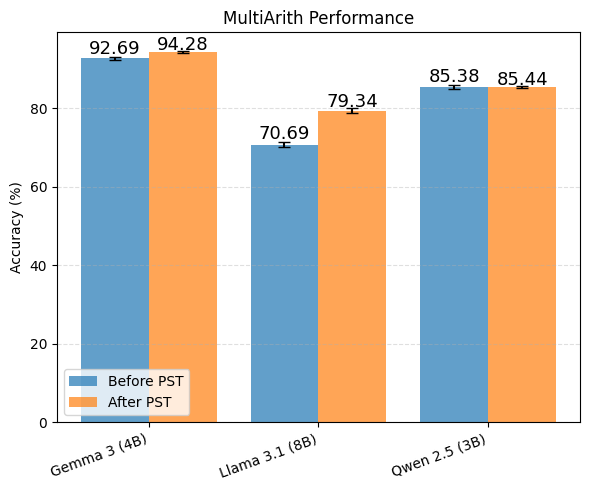}
    \caption{MultiArith}
  \end{subfigure}

  \caption{
  Accuracy before and after fine-tuning on SimulEq, MATH-500-Numeric, and MultiArith.
  Each panel reports pre- and post-fine-tuning performance under \method\ for
  Gemma-3-4B,  LLaMA-3.1-8B, and Qwen2.5-3B.
  }
  \label{fig:accuracy_all_large}
\end{figure*}

We observe consistent performance improvements of PST over the original models in \Cref{fig:accuracy_all_large}, consistent with the trends reported in \Cref{sec.exp}.

\subsection{More Baseline Comparisons}
\label{app:more_baselines}

We provide additional baseline comparisons to distinguish PST from standard
self-consistency and self-training methods. These comparisons evaluate whether
the gains of PST can be explained by sampling more responses, by ordinary
self-training, or by majority-vote pseudo-labeling.

\begin{table*}[t]
\centering
\small
\begin{tabular}{llccc}
\toprule
Dataset & Model & 4 Samples & 8 Samples & 16 Samples \\
\midrule

\multirow{3}{*}{MATH-500}
& Gemma-2-2B & 38.05 $\rightarrow$ 41.51 & 44.03 $\rightarrow$ 49.69 & 52.52 $\rightarrow$ 56.60 \\
& LLaMA-3.2-1B & 42.14 $\rightarrow$ 44.97 & 51.57 $\rightarrow$ 54.72 & 61.95 $\rightarrow$ 64.47 \\
& Qwen2.5-1.5B & 53.46 $\rightarrow$ 54.40 & 60.38 $\rightarrow$ 63.21 & 70.44 $\rightarrow$ 73.27 \\

\midrule

\multirow{3}{*}{SimulEq}
& Gemma-2-2B & 27.16 $\rightarrow$ 35.80 & 39.51 $\rightarrow$ 48.56 & 51.44 $\rightarrow$ 58.85 \\
& LLaMA-3.2-1B & 28.40 $\rightarrow$ 31.69 & 36.63 $\rightarrow$ 42.39 & 45.68 $\rightarrow$ 53.09 \\
& Qwen2.5-1.5B & 37.45 $\rightarrow$ 42.80 & 43.21 $\rightarrow$ 51.44 & 55.56 $\rightarrow$ 59.26 \\

\midrule

\multirow{3}{*}{MultiArith}
& Gemma-2-2B & 86.90 $\rightarrow$ 88.62 & 93.45 $\rightarrow$ 96.21 & 98.97 $\rightarrow$ 98.97 \\
& LLaMA-3.2-1B & 84.48 $\rightarrow$ 89.66 & 94.48 $\rightarrow$ 95.52 & 98.28 $\rightarrow$ 98.97 \\
& Qwen2.5-1.5B & 98.28 $\rightarrow$ 99.31 & 99.66 $\rightarrow$ 99.66 & 99.66 $\rightarrow$ 100.00 \\

\bottomrule
\end{tabular}
\caption{
Self-consistency accuracy (\%) before and after PST training. Each entry is
reported as Original $\rightarrow$ PST.
}
\label{tab:pst_self_consistency}
\end{table*}

First, we evaluate self-consistency performance with 4, 8, and 16 sampled
responses. As shown in \cref{tab:pst_self_consistency}, PST improves
self-consistency accuracy in most settings. This indicates that PST improves
not only single-sample accuracy, but also the quality of the model's sampled
solution set.

\begin{table*}[t]
\centering
\small
\begin{tabular}{llcc}
\toprule
Dataset & Model & Self-Training Baseline & PST \\
\midrule

\multirow{3}{*}{MATH-500}
& Gemma 2 (2B) 
& $23.18 \pm 1.61$ 
& $\mathbf{23.87 \pm 1.37}$ \\

& LLaMA 3.2 (1B)
& $24.65 \pm 1.38$ 
& $\mathbf{26.73 \pm 1.95}$ \\

& Qwen 2.5 (1.5B) 
& $31.98 \pm 2.10$ 
& $\mathbf{33.80 \pm 1.69}$ \\

\midrule

\multirow{3}{*}{SimulEq}
& Gemma 2 (2B) 
& $13.05 \pm 1.55$ 
& $\mathbf{15.27 \pm 1.83}$ \\

& LLaMA 3.2 (1B)
& $12.72 \pm 1.52$ 
& $\mathbf{14.94 \pm 2.03}$ \\

& Qwen 2.5 (1.5B) 
& $20.21 \pm 1.58$ 
& $\mathbf{25.52 \pm 1.43}$ \\

\midrule

\multirow{3}{*}{MultiArith}
& Gemma 2 (2B) 
& $57.28 \pm 2.90$ 
& $\mathbf{57.48 \pm 3.21}$ \\

& LLaMA 3.2 (1B)
& $53.59 \pm 2.62$ 
& $\mathbf{58.66 \pm 3.08}$ \\

& Qwen 2.5 (1.5B) 
& $83.21 \pm 1.59$ 
& $\mathbf{90.21 \pm 1.07}$ \\

\bottomrule
\end{tabular}
\caption{
Accuracy (\%) comparison between the self-training baseline and PST. Results
are reported as mean $\pm$ standard deviation across random seeds.
}
\label{tab:pst_vs_selftraining_all}
\end{table*}

Second, we compare PST with a direct self-training baseline. This baseline
fine-tunes the model on pseudo-labels generated from its own outputs, using the
same hyperparameters as PST. As shown in \cref{tab:pst_vs_selftraining_all},
PST consistently outperforms this baseline across datasets and model families.
Thus, the improvement of PST is not simply due to additional fine-tuning on
model-generated answers.

\begin{table}[t]
\centering
\small
\begin{tabular}{lccc}
\toprule
Dataset & Before PST MV & After PST MV & Best Post-PST Single \\
\midrule
MATH-500 & $25.79$ & $30.92$ & $\mathbf{33.80}$ \\
SimulEq & $15.09$ & $16.05$ & $\mathbf{25.52}$ \\
MultiArith & $72.87$ & $79.77$ & $\mathbf{90.21}$ \\
\bottomrule
\end{tabular}
\caption{
Majority-vote performance before and after PST. MV denotes majority voting
among the three models. The best post-PST single model is Qwen 2.5 (1.5B).
}
\label{tab:majority_vote_before_after_pst}
\end{table}

Third, we examine whether PST also improves model ensembles. In
\cref{tab:majority_vote_before_after_pst}, majority voting among the three
models improves after PST on all datasets. Moreover, the best single model
after PST outperforms the original pre-PST majority-vote ensemble. This
suggests that PST can produce a single model that is stronger than the original
model ensemble.

\begin{table*}[t]
\centering
\small
\begin{tabular}{llccc}
\toprule
Dataset & Model & Before PST & PST & Majority Vote + SFT \\
\midrule

\multirow{3}{*}{SimulEq}
& Gemma 2 (2B) & $13.17$ & $\mathbf{15.27}$ & $13.99$ \\
& LLaMA 3.2 (1B) & $12.14$ & $\mathbf{14.94}$ & $12.76$ \\
& Qwen 2.5 (1.5B) & $21.44$ & $\mathbf{25.52}$ & $20.16$ \\

\midrule

\multirow{3}{*}{MATH-500-Numeric}
& Gemma 2 (2B) & $20.94$ & $\mathbf{23.87}$ & $21.38$ \\
& LLaMA 3.2 (1B) & $24.43$ & $\mathbf{26.73}$ & $24.84$ \\
& Qwen 2.5 (1.5B) & $32.58$ & $\mathbf{33.80}$ & $32.39$ \\

\midrule

\multirow{3}{*}{MultiArith}
& Gemma 2 (2B) & $56.62$ & $\mathbf{57.48}$ & $56.55$ \\
& LLaMA 3.2 (1B) & $50.97$ & $\mathbf{58.65}$ & $53.45$ \\
& Qwen 2.5 (1.5B) & $85.76$ & $\mathbf{90.21}$ & $86.90$ \\

\bottomrule
\end{tabular}
\caption{
Comparison between PST and a majority-vote self-training baseline. Majority
Vote + SFT first aggregates sampled responses by majority voting over final
numerical answers, and then fine-tunes on one randomly selected response with
the majority-vote answer.
}
\label{tab:majority_vote_sft_baseline}
\end{table*}

Finally, we compare PST with a majority-vote self-training baseline. This
baseline uses majority voting only to construct pseudo-labels, and then applies
standard SFT. As shown in \cref{tab:majority_vote_sft_baseline}, PST
outperforms Majority Vote + SFT across all datasets and models. This suggests
that PST's gains are not merely due to pseudo-labeling by self-consistency.
Instead, the peer-predictive weighting signal provides additional information
beyond the final majority-vote answer.

\subsection{Ablation Study on Epochs}
\label{app:epoch_ablation}

We further study how PST performance changes across training epochs. The goal
of this ablation is to check whether the improvement of PST is stable across
training, rather than coming from a single lucky checkpoint. We report the
accuracy of each model from Epoch 1 to Epoch 5, where Epoch 5 is used as the
final PST checkpoint in our main experiments.

\begin{table}[t]
\centering
\small
\begin{tabular}{lccccc}
\toprule
Model & Epoch 1 & Epoch 2 & Epoch 3 & Epoch 4 & Epoch 5 / PST \\
\midrule
LLaMA-3.2-1B
& $13.00 \pm 1.94$
& $13.25 \pm 0.94$
& $14.16 \pm 1.50$
& $14.16 \pm 1.94$
& $\mathbf{14.94 \pm 2.03}$ \\

Qwen2.5-1.5B
& $21.40 \pm 1.54$
& $22.14 \pm 2.17$
& $22.80 \pm 2.15$
& $24.61 \pm 2.05$
& $\mathbf{25.52 \pm 1.43}$ \\

Gemma-2-2B
& $13.74 \pm 1.56$
& $\mathbf{15.39 \pm 1.61}$
& $14.32 \pm 1.87$
& $\mathbf{15.39 \pm 2.30}$
& $15.27 \pm 1.83$ \\
\bottomrule
\end{tabular}
\caption{SimulEq accuracy (\%) across PST training epochs. Results are reported as mean $\pm$ standard deviation across random seeds.}
\label{tab:simuleq_pst_epochs}
\end{table}

\begin{table}[t]
\centering
\small
\begin{tabular}{lccccc}
\toprule
Model & Epoch 1 & Epoch 2 & Epoch 3 & Epoch 4 & Epoch 5 / PST \\
\midrule
LLaMA-3.2-1B
& $24.59 \pm 1.53$
& $25.79 \pm 1.82$
& $25.85 \pm 1.03$
& $25.97 \pm 2.63$
& $\mathbf{26.73 \pm 1.95}$ \\

Qwen2.5-1.5B
& $33.02 \pm 1.37$
& $33.02 \pm 2.38$
& $\mathbf{35.22 \pm 1.47}$
& $34.59 \pm 1.82$
& $33.80 \pm 1.69$ \\

Gemma-2-2B
& $21.76 \pm 1.00$
& $23.21 \pm 1.76$
& $\mathbf{24.34 \pm 0.72}$
& $24.09 \pm 0.72$
& $23.87 \pm 1.37$ \\
\bottomrule
\end{tabular}
\caption{MATH-500 accuracy (\%) across PST training epochs. Epoch 5 corresponds to the final PST checkpoint. Results are reported as mean $\pm$ standard deviation across random seeds.}
\label{tab:math500_pst_epochs}
\end{table}

\begin{table}[t]
\centering
\small
\begin{tabular}{lccccc}
\toprule
Model & Epoch 1 & Epoch 2 & Epoch 3 & Epoch 4 & Epoch 5 / PST \\
\midrule
LLaMA-3.2-1B
& $51.17 \pm 1.35$
& $52.62 \pm 2.78$
& $56.00 \pm 1.41$
& $\mathbf{60.07 \pm 1.53}$
& $58.66 \pm 3.08$ \\

Qwen2.5-1.5B
& $87.72 \pm 1.37$
& $87.72 \pm 1.53$
& $88.76 \pm 1.28$
& $89.10 \pm 1.85$
& $\mathbf{90.21 \pm 1.07}$ \\

Gemma-2-2B
& $55.17 \pm 3.02$
& $53.10 \pm 1.22$
& $52.48 \pm 1.64$
& $\mathbf{58.48 \pm 3.50}$
& $57.48 \pm 3.21$ \\
\bottomrule
\end{tabular}
\caption{MultiArith accuracy (\%) across PST training epochs. Epoch 5 corresponds to the final PST checkpoint. Results are reported as mean $\pm$ standard deviation across random seeds.}
\label{tab:multiarith_pst_epochs}
\end{table}

As shown in \cref{tab:simuleq_pst_epochs,tab:math500_pst_epochs,tab:multiarith_pst_epochs},
PST generally improves as training proceeds, especially for LLaMA-3.2-1B and
Qwen2.5-1.5B. On SimulEq, the final checkpoint gives the best result for both
LLaMA and Qwen, while Gemma reaches a similar peak at earlier epochs. On
MATH-500 and MultiArith, some models reach their best accuracy before Epoch 5,
but the final PST checkpoint remains close to the best epoch.

These results also suggest that iterative application of PST gradually
saturates. Performance usually improves in the first few epochs, but later
epochs often show smaller gains or mild fluctuations rather than continuous
large improvements. This behavior is consistent with our theoretical
intuition: the potential improvement of PST is bounded by the available
generator-verifier (GV) gap. Once repeated PST updates reduce this gap on the
same data, further applications are expected to provide only limited additional
benefit unless new data or stronger peer signals are introduced. Overall, the
epoch ablation shows that PST is relatively stable across training and that
its gains are not caused by a single isolated checkpoint, although early
stopping could further improve some model-dataset pairs.

\section{Why Is Verification Easier Than Generation:
A Transformer Architecture Perspective}
\label{sec.hardness}

In this section, we provide theoretical intuition for a fundamental
asymmetry underlying our method: in reasoning-oriented tasks, verifying a
candidate solution is often substantially easier than generating a correct
solution from scratch.
This verification--generation gap enables reliable learning signals even
in the absence of external supervision and plays a central role in our
analysis.

Following \cite{wang2024alpine,wang2025benefits}, we view reasoning as a
\emph{long-range logical deduction problem}.
Such tasks require deriving a final conclusion through a sequence of
logically dependent intermediate steps.
Maintaining global coherence across many steps is inherently challenging
for autoregressive language models and makes direct generation of correct
solutions difficult.

A long-range logical deduction naturally induces a dependency structure.
Each intermediate conclusion corresponds to a reasoning state, and each
valid inference corresponds to a transition between states.
Although this structure may not be explicitly represented, it can be
abstracted as a directed dependency graph whose nodes represent
intermediate reasoning states and whose edges represent valid logical
implications.
Solving the deduction task corresponds to identifying a valid chain of
inferences connecting the initial premises to the final conclusion.

Crucially, there is a qualitative difference between constructing such a
chain and verifying one.
\emph{Generation} requires discovering a globally consistent sequence of
deductions, whereas \emph{verification} only requires checking whether a
proposed sequence is logically valid.
This asymmetry suggests that even when a model struggles to generate a
correct answer, it may still be capable of reliably recognizing correctness
when presented with a candidate solution.

Formally, we represent a long-range logical deduction task by a directed
graph $G = (V, E)$ with a designated source node $s$ and target node $t$.
Nodes correspond to intermediate reasoning states, and directed edges
represent valid inference steps.
Under this abstraction, \emph{generation} corresponds to constructing a
valid sequence of deductions that connects $s$ to $t$, while
\emph{verification} corresponds to checking whether a given candidate
sequence indeed constitutes such a valid connection.
As we show below, these two tasks differ sharply in computational
complexity.

\subsection{Complexity Analysis of the Verification Task}

We first show that the verification task can be carried out by a
constant-depth transformer with polynomial embedding size in the input
length.
Intuitively, verification only requires local consistency checks across a
proposed deduction chain and can be performed in a highly parallel manner.

\begin{lemma}\label{lemma.tc0}
Every problem in the uniform-TC$^0$ family can be simulated by a
constant-depth, log-precision transformer with polynomial embedding size
in the input length.
Conversely, every such transformer can be simulated by a threshold circuit
of polynomial size and constant depth \cite{merrill2023parallelism}.
\footnote{It is conjectured that an $O(\log^k n)$-depth, log-precision
transformer with polynomial embedding size has computational power
equivalent to the class TC$^k$.}
\end{lemma}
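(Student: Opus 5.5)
The lemma is the characterization of Merrill and Sabharwal \cite{merrill2023parallelism}, and I would prove it as two simulations, one per direction, at the level of detail that reference uses. Throughout I fix the model: log-precision means every activation and attention score is stored in $O(\log n)$ bits, and layer norm, softmax and the feedforward maps are computed on these finite-precision values.

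\textbf{Transformer to circuit.} This is the more mechanical direction, so I do it first. Each scalar is an $O(\log n)$-bit string, so any function of $O(1)$ such scalars is a lookup on $O(\log n)$ input bits. A brute-force DNF for it has $2^{O(\log n)} = \mathrm{poly}(n)$ terms and depth two. That covers every position-wise operation (feedforward blocks, layer norm), provided each coordinate depends on $O(1)$ scalars or is assembled by an iterated sum.

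Attention needs the classical arithmetic facts that make threshold circuits strong. Iterated addition of $n$ numbers of $O(\log n)$ bits is in uniform $\mathrm{TC}^0$, and so are iterated multiplication and division (Hesse--Allender--Barrington). Using these, each head is computed as follows:
\begin{itemize}
\item the scores $q_i^\top k_j$ as iterated sums;
\item the exponentials by table lookup on the rounded inputs;
\item the normalizer as an iterated sum;
\item the weighted average of values via iterated addition followed by one division.
\end{itemize}
Each stage has constant depth and polynomial size. Stacking a constant number of layers gives a constant-depth, polynomial-size threshold circuit. For uniformity I would argue that this construction is computable in logspace (indeed DLOGTIME) from $n$.

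\textbf{Circuit to transformer.} Here I use the standard normal form for uniform $\mathrm{TC}^0$: a constant-depth circuit of $\mathrm{MAJ}$ and $\mathrm{NOT}$ gates whose wiring is decidable by a uniform procedure. I simulate one circuit layer with $O(1)$ transformer layers. The circuit is laid out over a polynomial-size set of gate indices, carried in the width of the embedding. Each gate $g$ uses one attention head over the positions carrying its predecessor values. An attention pattern that puts uniform weight on the wired inputs returns the fraction of inputs that are $1$ to within $O(\log n)$-bit precision. A feedforward threshold at $1/2$ then produces the majority bit. Negations are a feedforward affine map. Depth is preserved up to a constant factor.

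\textbf{Main obstacle.} The hard step is the second direction: realizing uniform wiring through attention. The attention pattern has to select exactly the predecessors of each gate using only positional encodings and arithmetic computable inside the transformer. I would follow Merrill--Sabharwal here. Their approach encodes gate indices in $O(\log n)$ bits and uses the FO-uniform description of the circuit to compute, inside a constant number of layers, a query--key match that is exact under log precision. This relies on $O(\log n)$ bits being enough to separate the "hit" scores from the "miss" scores. Since the paper cites that result, I would state the construction and defer the detailed precision bookkeeping to \cite{merrill2023parallelism}.
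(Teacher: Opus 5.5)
The paper does not prove this lemma; it only cites \cite{merrill2023parallelism}. Your transformer-to-circuit direction is essentially that reference's main theorem and is fine, with two caveats. First, once the embedding size is polynomial, the lookup-table argument only covers constant-width pieces, so you must rely on the affine-plus-coordinatewise structure of the feedforward and normalization layers, as you partly do. Second, the weights then depend on $n$, so the circuits are uniform only if the weights are. The reference gets logspace uniformity for fixed-size models under a space bound on the feedforward nets, not the DLOGTIME uniformity you assert. Since the statement's converse asks only for polynomial size and constant depth, you can simply drop the uniformity claim.

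The genuine gap is the circuit-to-transformer direction. As sketched it does not type-check. You place the $\mathrm{poly}(n)$ gates in the width of the embedding, but then select each gate's predecessors with an attention head over \emph{positions}. Attention weights select positions, not coordinates, and an input of length $n$ has only $n$ positions. One gate per position would need polynomial padding, which the statement does not provide. More importantly, the step you defer to \cite{merrill2023parallelism} is not in that paper. It proves only the upper bound and leaves tightness open. Its lower bounds are only for circuit \emph{evaluation}, where the circuit description is supplied in the input or as advice. There every wire is a token, and attention merely matches explicitly given $O(\log n)$-bit gate indices. Deciding the wiring of an FO-uniform family inside the model, from gate indices alone, amounts to simulating first-order quantifiers with attention, and that is precisely the unresolved part.

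What you can actually prove is either the advice version, or a direct nonuniform simulation if ``polynomial embedding size'' is read as a separate transformer for each $n$. For the latter:
\begin{enumerate}
\item Use one uniform-attention head whose values are $x_j e_j$. Here $e_j$ is a one-hot positional code and $x_j e_j$ is formed by a position-wise ReLU.
\item This gathers all input bits at a single position, with coordinate $j$ equal to $x_j/n$, which is exactly recoverable with $O(\log n)$ bits.
\item Each circuit layer is then one feedforward block. Its first matrix is the wiring matrix, followed by the integer threshold $\mathrm{ReLU}(a-\theta+1)-\mathrm{ReLU}(a-\theta)$, with outputs written into fresh coordinates.
\end{enumerate}
In that reading the wiring lives in the weights, uniformity plays no role, and the construction covers all of nonuniform $\mathrm{TC}^0$.
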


\begin{theorem}[Verification lies in TC$^0$]\label{thm.verify}
For both in-context and pre-learned graph settings, the verification task,
namely checking whether a proposed deduction chain is valid and connects the
source to the target, belongs to the complexity class TC$^0$.
Consequently, it can be simulated by a constant-depth, log-precision
transformer with polynomial embedding size.
\end{theorem}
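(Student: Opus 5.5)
We will exhibit, for each input length, a constant-depth, polynomial-size threshold circuit (uniformly generated) that decides whether a proposed chain is valid. The second sentence of Theorem~\ref{thm.verify} then follows immediately from Lemma~\ref{lemma.tc0}.

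\emph{Encoding.} First we fix an encoding. In the in-context setting, the input lists the edge set $E$ of $G=(V,E)$ as pairs of node identifiers (each $O(\log n)$ bits), the designated nodes $s,t$, and a candidate chain $v_0,v_1,\dots,v_k$ with $k\le \mathrm{poly}(n)$. In the pre-learned setting, $E$ is fixed and not part of the input. Since the family is indexed by input length, $E$ can then be hard-wired into the circuit, with adjacency membership realised by a constant-depth lookup (a DNF of polynomial size over the $O(\log n)$ bits of each endpoint pair). Validity of the chain is the conjunction of three conditions:
\begin{align*}
&\text{(a) } v_0=s, \\
&\text{(b) } v_k=t, \\
&\text{(c) } \forall j<k:\ (v_j,v_{j+1})\in E.
\end{align*}

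\emph{Circuit.} Each condition is built from constant-depth gadgets.
\begin{itemize}
\item[] \emph{Equality of two $O(\log n)$-bit identifiers.} This is an AND of bitwise XNORs, which is in $\mathrm{AC}^0\subseteq\mathrm{TC}^0$; it handles (a) and (b).
\item[] \emph{Edge membership for a fixed $j$ (in-context case).} Compute $\bigvee_{(u,w)\in E}\big[(v_j{=}u)\wedge(v_{j+1}{=}w)\big]$. This is an OR over polynomially many equality tests, so it has constant depth and polynomial size.
\item[] \emph{Condition (c).} All $k$ membership tests run in parallel, and their results are combined with a single unbounded fan-in AND gate.
\end{itemize}
The total size is $O(k\cdot|E|\cdot\log n)=\mathrm{poly}(n)$ and the depth is constant. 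Threshold gates are needed only if the chain length $k$ is not delimited in unary. In that case, a position $j$ is identified as the last element of the chain by counting separators, and counting is exactly what $\mathrm{TC}^0$ provides.

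\emph{Main obstacle.} The expected difficulty is uniformity and the variable-length parsing. The chain's length and the positions of its tokens are not fixed in advance, so the circuit must determine "the $j$-th node of the chain" and "the last node of the chain" from raw positions. I would address this by using threshold (majority-based) gates to compute the prefix count of delimiter symbols before each position; this is standard in $\mathrm{TC}^0$. Position $p$ holds $v_j$ exactly when the prefix count equals $j$, a comparison that is again constant depth. Gating each membership test with "both positions are active and consecutive" lets (c) range over all candidate index pairs uniformly.

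Finally, one checks that the circuit description is DLOGTIME/logspace-uniform, since its structure depends only on $n$ through simple index arithmetic. This places verification in uniform $\mathrm{TC}^0$, and Lemma~\ref{lemma.tc0} yields the transformer simulation.
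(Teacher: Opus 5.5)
Your proposal is correct and takes essentially the same route as the paper: you test each consecutive pair of chain nodes for edge membership in parallel (against the in-context edge list, or a stored lookup table in the pre-learned case), combine the results with one unbounded fan-in AND, and invoke Lemma~\ref{lemma.tc0} for the transformer simulation. You are more careful than the paper in checking the endpoints and in handling encoding, delimiter parsing, and uniformity, with one qualification: in the pre-learned case, hard-wiring an $n$-dependent edge set is uniform only if that graph family is itself easily describable (automatic for a single fixed graph, whose lookup table has constant size), a point the paper also leaves implicit.
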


\begin{proof}
We show that the verification task lies in TC$^0$ for both the in-context
and pre-learned graph settings. TC$^0$ consists of problems solvable by
uniform, constant-depth, polynomial-size threshold circuits with AND, OR,
and MAJORITY gates.

Given a graph $G = (V, E)$ and a candidate path $(v_0, v_1, \dots, v_k)$
with $v_0=s$ and $v_k=t$,
verification reduces to checking in parallel whether each pair
$(v_i, v_{i+1})$ is an edge in $E$. Each edge-membership test is a simple
table lookup or adjacency-matrix access, which can be performed in constant
depth using threshold circuits. The final correctness check is the AND of
all edge-validity results, which also remains in TC$^0$.

In the pre-learned setting, the graph is assumed to be stored in the
model's internal representation from training; verifying a candidate path
amounts to matching queried edges against this stored structure. In the
in-context setting, the graph is explicitly provided as part of the input.
In both cases, verification again reduces to parallel edge-existence
queries, each implementable with constant-depth circuitry.

All required operations therefore fit within uniform threshold circuits of
constant depth and polynomial size. Hence the verification task belongs to
TC$^0$, establishing the theorem.
\end{proof}

This result follows because each local inference check
$(v_j, v_{j+1}) \in E$ can be performed in constant parallel time, and the
results can be aggregated using AND, OR, or MAJORITY gates.
In pre-learned graphs, this corresponds to table lookup.
In in-context graphs, it corresponds to explicit consistency checking
within the prompt.
The entire verification procedure therefore fits within TC$^0$.

Thus, determining whether a proposed response is \emph{correct} ($C$) or
\emph{incorrect} ($Z$) is computationally lightweight and well within the
capabilities of low-depth transformers.

\subsection{Complexity Analysis of the Generation Task}

We next analyze the generation task, which requires constructing a valid
deduction chain rather than merely verifying one.
We show that this process is substantially harder and exceeds the
computational power of constant-depth transformers.

\begin{theorem}[Generation exceeds TC$^0$]\label{thm:path}
For both in-context and pre-learned graph settings, constructing a valid
deduction chain is computationally harder than verification:
\begin{itemize}
    \item In and-or graphs, the PATH problem is P-hard. Hence, generation
    in the in-context setting exceeds TC$^0$ unless
    $\text{TC}^0 = \text{P}$.
    \item In or-graphs, the PATH problem is NL-complete. Hence, generation
    exceeds TC$^1$ unless $\text{NL} = \text{TC}^1$.
    \item Even in the pre-learned setting, evaluating deduction chains on
    and-or graphs can simulate monotone circuit evaluation, which is at
    least NL-hard \cite{rossman2014formulas}.
\end{itemize}
\end{theorem}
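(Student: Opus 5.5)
The statement to prove is Theorem~\ref{thm:path}, which has three parts. I would handle each bullet by citing a known completeness or hardness result and then combining it with the verification upper bound of Theorem~\ref{thm.verify} and the transformer--circuit correspondence of Lemma~\ref{lemma.tc0}. The common template runs as follows. First, fix the graph abstraction of long-range deduction from the start of this appendix: generation means producing an $s$--$t$ deduction chain, so any procedure that generates correct chains also decides reachability, i.e.\ the PATH problem. Second, show that PATH in the relevant graph class is hard for a class $\mathcal{C}$ under reductions weak enough to be absorbed by the lower class. Third, conclude that if generation were computable by constant-depth log-precision transformers, then by Lemma~\ref{lemma.tc0} it would lie in TC$^0$, which would collapse $\mathcal{C}$ into TC$^0$ (or into TC$^1$ for the second bullet). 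The contrast with Theorem~\ref{thm.verify} then gives the claimed asymmetry.

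For the first bullet, I would reduce the Circuit Value Problem, which is P-complete under logspace (indeed AC$^0$) reductions, to and-or graph reachability. Each AND gate becomes an and-node, which requires all of its predecessors to be derived. Each OR gate becomes an or-node. Input literals with value $1$ become source facts. The output gate is the target, and it is derivable exactly when the circuit evaluates to $1$. Because the reduction is local gate-by-gate rewiring, it is AC$^0$-computable. Hence a TC$^0$ generator would place P inside TC$^0$.

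For the second bullet, I would cite the standard fact that directed $s$--$t$ connectivity is NL-complete. Combined with NL $\subseteq$ TC$^1$, this means a TC$^1$ generator exists only if NL is no harder than TC$^1$; under the stated separation assumption, generation exceeds that class. Here I expect care over the phrasing, since NL $\subseteq$ TC$^1$ is known, so the claim must be read as ``not in a strictly smaller class unless NL collapses.'' I would state it conditionally, exactly as the theorem does.

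For the third bullet, I would let the stored graph be an and-or graph, so that evaluating a deduction chain simulates monotone circuit evaluation. I would then invoke the lower bounds of \cite{rossman2014formulas} for monotone formulas and circuits to get NL-hardness. The main obstacle is this bullet, because a pre-learned graph fixes the instance rather than supplying it as input. To keep hardness meaningful, I would first try taking the family of graphs to be non-uniform, memorized per input length, and make the query (which premises hold) the input, so that the task becomes evaluating a fixed monotone circuit on variable inputs.
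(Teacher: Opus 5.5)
Your first bullet works, and it is a cleaner route than the paper's Horn-satisfiability reduction. Two things should be made explicit. First, negations are pushed to the inputs by the dual-rail construction, so you are really using monotone CVP. Second, you compose the generator with a local-consistency verifier, as in Theorem~\ref{thm.verify}, to obtain a decision procedure.

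The real gap is the second bullet, which cannot be proved as stated. Directed reachability lies in $\mathrm{AC}^1\subseteq\mathrm{TC}^1$ unconditionally. So does outputting an explicit path: compute all distances by Boolean matrix powering, then pointer-jump along a canonical shortest-path successor. Hence ``generation exceeds $\mathrm{TC}^1$ unless $\mathrm{NL}=\mathrm{TC}^1$'' is true only if $\mathrm{NL}=\mathrm{TC}^1$. Your step ``a $\mathrm{TC}^1$ generator exists only if NL is no harder than $\mathrm{TC}^1$; under the stated separation assumption, generation exceeds that class'' is a non sequitur. NL-hardness plus membership in $\mathrm{TC}^1$ only yields $\mathrm{NL}\subseteq\mathrm{TC}^1$, which is already known. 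The paper's proof makes the identical error (``solving an NL-hard problem within $\mathrm{TC}^1$ would imply $\mathrm{NL}=\mathrm{TC}^1$''), so stating it ``exactly as the theorem does'' just inherits it. Your own repair is the right one, and you should commit to it. PATH is NL-complete under first-order (hence $\mathrm{AC}^0$) reductions, and uniform $\mathrm{TC}^0\subseteq\mathrm{L}\subseteq\mathrm{NL}$. So generation on or-graphs is not in $\mathrm{TC}^0$ unless $\mathrm{TC}^0=\mathrm{NL}$, which is all the constant-depth transformer conclusion needs.

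For the third bullet, the tool you name cannot supply NL-hardness. The cited Rossman (2014) result is an unconditional size lower bound for bounded-depth Boolean formulas (no threshold gates) computing small-distance connectivity. A lower bound against a restricted model is not a reduction, and it says nothing about $\mathrm{TC}^0$. The paper's argument does not fill this gap either: its claim that polynomial-size monotone circuits ``express all NL computations'' fails for non-monotone functions.

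Your reformulation (a fixed graph per input length, with the queried premises as the input) is the right start. What is missing is a choice of memorized graph that makes the task hard. Take it to be a logspace-uniform, polynomial-size monotone circuit for $s$--$t$ connectivity on $n$ vertices, namely $\lceil\log n\rceil$ rounds of Boolean squaring of $I\vee A$. Write it as an and-or graph whose input nodes are the adjacency bits. A derivation of the output node from the queried premises can be checked in $\mathrm{AC}^0$ by a local test in the spirit of Theorem~\ref{thm.verify}: each derived and-node has all of its predecessors derived earlier, and each or-node has some predecessor derived earlier. So a uniform $\mathrm{TC}^0$ generator would decide reachability and force $\mathrm{TC}^0=\mathrm{NL}$.

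Keep the memorized family uniform. With arbitrary per-length weights you only get $\mathrm{NL}\subseteq\mathrm{TC}^0/\mathrm{poly}$, which is a different hypothesis from the one the theorem states.
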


\begin{proof}
We analyze the computational complexity of constructing a valid
$s$--$t$ path in the three scenarios stated in the theorem.

\paragraph{1. In-context and-or graphs.}
We first show that path construction on and-or graphs in the in-context
setting is P-hard. The reduction is from Horn clause satisfiability, a
P-complete problem. Given a Horn formula $\varphi$, we construct an
and-or graph $G_\varphi$ in polynomial time as follows. Each positive
literal is represented as a seed node, and each negative literal is
represented as a target node. A clause containing one positive literal and
one or more negative literals introduces directed edges from the positive
literal node to the corresponding negative literal nodes. Clauses with
multiple negative literals introduce a new target node that aggregates their
dependencies. Finally, we add a single global target node connected to all
clause-level target nodes. A truth assignment satisfying $\varphi$
corresponds exactly to a path from some seed node to the global target node,
and conversely the existence of such a path yields a satisfying assignment.
The size of $G_\varphi$ is polynomial in $|\varphi|$, so the reduction is
valid. Since Horn satisfiability is P-complete, path construction on
and-or graphs is P-hard. Under the hypothesis $\mathrm{TC}^0 \subsetneq
\mathrm{P}$, this implies that constructing a valid path on and-or graphs
cannot be carried out by constant-depth threshold circuits and therefore
cannot be simulated by constant-depth, log-precision transformers with
polynomial embedding size.

\paragraph{2. In-context or-graphs.}
For or-graphs, the in-context problem is the classical directed reachability
problem: given the explicitly provided graph $G=(V,E)$ and distinguished
nodes $s,t$, determine whether there exists a directed path from $s$ to
$t$. This is the PATH problem, which is NL-complete. Since
\[
\mathrm{NC}^1 \subseteq \mathrm{L} \subseteq \mathrm{NL}
  \subseteq \mathrm{NC}^2
  \qquad\text{and}\qquad
\mathrm{NC}^i \subseteq \mathrm{AC}^i \subseteq \mathrm{TC}^i
  \subseteq \mathrm{NC}^{i+1},
\]
solving an NL-hard problem within $\mathrm{TC}^1$ would imply
$\mathrm{NL} = \mathrm{TC}^1$, an equality that is widely conjectured to be
false. Therefore, assuming $\mathrm{NL} \neq \mathrm{TC}^1$, path construction
on or-graphs in the in-context setting cannot be solved by $\mathrm{TC}^1$
circuits, and a fortiori cannot be simulated by constant-depth,
log-precision transformers.

\paragraph{3. Pre-learned and-or graphs.}
In the pre-learned setting, the transformer is assumed to have memorized the
entire graph structure during training. We show that even under this
assumption, constructing a valid path may require computational power at
least as strong as NL. The key observation is that reachability in an
and-or graph can simulate the evaluation of an arbitrary monotone circuit
of polynomial size. Each gate of the monotone circuit is encoded as a node
in the and-or graph: an AND gate corresponds to an ``and-node'' that
requires all predecessors to be reachable, and an OR gate corresponds to an
``or-node'' requiring at least one predecessor to be reachable. The output
gate of the monotone circuit is designated as the target node whose
reachability determines acceptance. This construction is standard and
preserves polynomial size; see \cite{rossman2014formulas}. Since monotone
circuits of polynomial size can express all NL computations, the existence
of a constant-depth (TC$^0$) implementation for reachability on these
graphs would imply $\mathrm{TC}^0 = \mathrm{NL}$, contrary to prevailing
complexity assumptions. Therefore, unless $\mathrm{TC}^0 = \mathrm{NL}$,
there exist pre-learned and-or graphs for which path construction exceeds
the computational power of constant-depth, log-precision transformers.

\paragraph{Conclusion.}
Across all three settings, constructing a valid path (planning) is at least
as hard as solving P-hard or NL-hard problems. Under standard complexity
separation assumptions ($\mathrm{TC}^0 \subsetneq \mathrm{P}$ and
$\mathrm{NL} \neq \mathrm{TC}^1$), these tasks exceed the computational
power of constant-depth, log-precision transformers with polynomial
embedding size, establishing the theorem.
\end{proof}

These results follow from standard complexity-theoretic reductions.
Horn clause satisfiability, which is P-complete, reduces to PATH on and-or
graphs, establishing P-hardness in the in-context setting.
The classical PATH problem on or-graphs is NL-complete.
In the pre-learned case, deduction-chain evaluation corresponds to
evaluating monotone circuits composed of AND and OR gates, which is known
to be NL-hard.
If constant-depth transformers could solve these tasks, it would imply
$\text{TC}^0 = \text{NL}$ or even $\text{TC}^0 = \text{P}$, which is widely
believed to be unlikely.

\subsection{Discussion and Implications}

The analysis above highlights a fundamental asymmetry between
\emph{verification} and \emph{generation} in long-range logical deduction
tasks.
Verification lies in TC$^0$ and is therefore feasible for constant-depth,
log-precision transformers, whereas generation is P- or NL-hard and
substantially more demanding.

This asymmetry directly motivates the assumption used in
\Cref{subsec:theory} that the aggregated evaluation signal can be initially stronger
than any individual generator.
Since verification is computationally easier than generation, models may
possess relatively strong verification capabilities even when their
generation capabilities are weak.
Aggregating multiple models allows these verification capabilities to be
combined, yielding an effective evaluation signal that satisfies
\[
v(0) > \max_i s_i(0).
\]

As training progresses, improvements in generation reduce this gap,
leading to saturation when models internalize the evaluation signal.

\subsection{Additional Reproducibility Details}
\label{app:repro_details}

This section reports only implementation-level details not specified in
the main text; all training and evaluation code
is publicly available at
\url{https://github.com/chen-lab-seas/Peer-Predictive-Self-Training}.
For evaluation, all models use stochastic decoding with temperature $0.7$
and sampling enabled, and results are averaged over $10$ independent runs
with different random seeds to estimate variance. Error bars denote
$\pm 1$ standard deviation across runs.

For baseline supervised fine-tuning (SFT), each model is trained
independently on $500$ examples from OpenMathInstruct-2 for $5$ epochs
using causal language modeling, with per-device batch size $1$, gradient
accumulation of $8$ steps, and learning rate $10^{-6}$. The learning rate
for all baselines is lightly tuned and found to have no significant
impact; we therefore use the same learning rate as PST to ensure a
consistent comparison.

For verifiable reinforcement learning, we adopt group relative policy
optimization (GRPO) as implemented in the TRL library~\cite{trl_grpo},
which optimizes the policy directly using scalar rewards assigned to
generated completions. The reward is binary and fully verifiable: a
completion receives reward $1$ if the gold numeric answer appears as a
standalone number in the output text and $0$ otherwise. GRPO training uses
$5$ epochs, per-device batch size $1$, gradient accumulation of $8$ steps,
learning rate $10^{-6}$, and samples $4$ completions per prompt during
training, with prompt and completion lengths capped at $512$ and $256$
tokens respectively.

For PST training, models are shuffled at the start of each epoch using a
fixed seed so that the identity of the final aggregating model rotates
across epochs. The final model produces the aggregate response but is
updated only via standard next-token prediction on its own output, while
non-final models are updated by next-token cross-entropy on their own
generated responses, scaled by a sigmoid function of the PMI-based signal
with temperature $\tau=3.0$. Gradients are clipped to global norm $1.0$,
and optimization steps are skipped whenever non-finite gradients are
detected.

\end{document}